%% file: main.tex
\documentclass{article}
\usepackage{iclr2027_conference,times}
\iclrfinalcopy

\input{math_commands.tex}

\usepackage{hyperref}
\usepackage{url}
\usepackage{booktabs}
\usepackage{colortbl}
\usepackage{graphicx}
\graphicspath{{figs/}}
\usepackage{amsmath}
\usepackage{amsthm}
\usepackage{thmtools,thm-restate}
\usepackage{multirow}
\usepackage{enumitem}
\usepackage{caption}
\usepackage{etoc}

\newtheorem{proposition}{Proposition}
\newtheorem{lemma}{Lemma}
\usepackage[most]{tcolorbox}
\definecolor{accent}{HTML}{1F5288}
\hypersetup{colorlinks=true,linkcolor=accent,citecolor=accent,urlcolor=accent}
\tcolorboxenvironment{proposition}{
  enhanced,
  colback=accent!4, colframe=accent!50,
  boxrule=0.5pt, arc=2.5pt,
  left=7pt, right=7pt, top=3pt, bottom=3pt, boxsep=0pt,
  before skip=6pt, after skip=6pt}
\tcolorboxenvironment{restatable}{
  enhanced,
  colback=accent!4, colframe=accent!50,
  boxrule=0.5pt, arc=2.5pt,
  left=7pt, right=7pt, top=3pt, bottom=3pt, boxsep=0pt,
  before skip=6pt, after skip=6pt}
\newtcolorbox{restatedbox}{
  enhanced,
  colback=accent!4, colframe=accent!50,
  boxrule=0.5pt, arc=2.5pt,
  left=7pt, right=7pt, top=3pt, bottom=3pt, boxsep=0pt,
  before skip=6pt, after skip=6pt}
\tcolorboxenvironment{lemma}{
  enhanced,
  colback=accent!4, colframe=accent!50,
  boxrule=0.5pt, arc=2.5pt,
  left=7pt, right=7pt, top=3pt, bottom=3pt, boxsep=0pt,
  before skip=6pt, after skip=6pt}

\newif\ifinappendix
\newcommand{\proofin}[1]{\ifinappendix\else#1\fi}

\newtcolorbox{promptbox}{enhanced, colback=black!3, colframe=black!25,
  boxrule=0.4pt, arc=2pt, left=6pt, right=6pt, top=3pt, bottom=3pt,
  fontupper=\small, before skip=5pt, after skip=7pt}

\definecolor{tabsub}{gray}{0.45}
\definecolor{tabrule}{gray}{0.62}
\newcommand{\hmidrule}{\arrayrulecolor{tabrule}\midrule\arrayrulecolor{black}}
\newcommand{\hcmidrule}[1]{\arrayrulecolor{tabrule}#1\arrayrulecolor{black}}
\newcommand{\tsub}[1]{{\color{tabsub}#1}}
\newcommand{\ci}[2]{#1\makebox[2.2em][l]{$^{\textcolor{tabsub}{\pm\text{#2}}}$}}
\newcommand{\oursrow}{\rowcolor{accent!7}}

\newcommand{\apptocline}[5]{{\leftskip=#1\relax \rightskip=25pt plus 1fil \parfillskip=-\rightskip
   \parindent=0pt \hangindent=#2\relax
   \makebox[#2][l]{#3}#4\nobreak\hfill\nobreak\makebox[25pt][r]{#5}\par}}
\newcommand{\appendixcontents}{\begingroup
  \hypersetup{hidelinks}\setlength{\parskip}{0pt}\etocsettagdepth{main}{none}\etocsettagdepth{appendix}{subsection}\etocsettocstyle{\noindent{\large\sc Appendix}\par
    \vspace{1.2ex}{\color{accent!40}\hrule height 0.4pt}\vspace{0.2ex}}{\vspace{2.2ex}{\color{accent!40}\hrule height 0.4pt}\vspace{3.5ex}}\etocsetstyle{section}{}{\vspace{2.4ex}}{\apptocline{0pt}{20pt}{\textcolor{accent}{\etocnumber}}{\sc\etocname}{\textcolor{black!50}{\etocpage}}}{}\etocsetstyle{subsection}{\vspace{0.7ex}}{\vspace{0.6ex}}{{\small\apptocline{20pt}{22pt}{\textcolor{black!50}{\etocnumber}}{\etocname}{\textcolor{black!50}{\etocpage}}}}{}\tableofcontents
  \endgroup}

\newcommand{\tv}{\mathrm{TV}}
\newcommand{\mmd}{\mathrm{MMD}}

\title{Sample What You Say: Aligning Language Models to Sample the Distributions They State}

\author{Kasra Arabi \\ New York University \And Virginia Smith \\ Carnegie Mellon University \And Chhavi Yadav \\ Carnegie Mellon University}

\begin{document}
\etocdepthtag.toc{main}

\maketitle

\begin{abstract}
\input{main_files/kabstract}
\end{abstract}

\section{Introduction}
\label{sec:intro}
\input{main_files/kintro}

\section{Preliminaries}
\label{sec:setup}
\input{main_files/ksetup}

\section{Method}
\label{sec:method}

GRPO needs a reward for each rollout, but whether the model samples correctly depends on the group as a whole (Section~\ref{sec:setup}). Section~\ref{sec:witness} derives such a reward, the witness advantage, from the maximum mean discrepancy. Section~\ref{sec:optimizes} shows that, in expectation, training with this advantage performs gradient descent on the distance to the target distribution. Section~\ref{sec:simpler} shows that the group-scalar reward optimizes a biased objective and that the sign witness does not train outcomes with small target probability toward their targets.

\subsection{The Witness Advantage}
\label{sec:witness}
\input{main_files/kwitness}

\subsection{The Witness Advantage Trains Toward the Target Distribution}
\label{sec:optimizes}
\input{main_files/kwitnessopt}

\subsection{Where Simpler Rewards Fail}
\label{sec:simpler}
\input{main_files/ksimplerrewards}

\section{Experiments}
\label{sec:experiments}

We first describe the targets, the metric, and the baselines (Section~\ref{sec:impl}). We then train on synthetic distribution families, where the target is known exactly and we control which parameters and families the model sees in training (Section~\ref{sec:synthetic}). On these targets, we compare the witness advantage with the baselines on the training targets, then test whether training transfers to unseen parameters and families, and then compare it with the two simpler rewards of Section~\ref{sec:simpler}. Lastly, we test the witness advantage in two settings closer to real applications (Section~\ref{sec:real}).

\subsection{Setup}
\label{sec:impl}
\input{main_files/kimplementationdetails}

\subsection{Results on Synthetic Distributions}
\label{sec:synthetic}
\input{main_files/ksynthfamilies}

\subsection{Results on Tasks Closer to the Real World}
\label{sec:real}
\input{main_files/kopiniondist}

\section{Discussion}
\label{sec:discussion}

\input{main_files/kdiscussion}

\clearpage

\bibliography{iclr2027_conference}
\bibliographystyle{iclr2027_conference}

\clearpage
\appendix
\inappendixtrue
\etocdepthtag.toc{appendix}
\appendixcontents
\clearpage

\input{main_files/kappendix}

\end{document}

%% file: math_commands.tex
\usepackage{amsmath,amsfonts,bm}

\def\eqref#1{equation~\ref{#1}}

\def\1{\bm{1}}

\DeclareMathAlphabet{\mathsfit}{\encodingdefault}{\sfdefault}{m}{sl}
\SetMathAlphabet{\mathsfit}{bold}{\encodingdefault}{\sfdefault}{bx}{n}

\newcommand{\E}{\mathbb{E}}

%% file: main_files/kabstract.tex
Language models are increasingly used to sample from a specified distribution, for instance, to simulate survey respondents or generate synthetic data. Instruction-tuned models can state such a distribution correctly and still fail to sample from it. Prompting and changes to decoding reduce this mismatch only partly, which motivates training with policy optimization. Group relative policy optimization (GRPO) is a natural fit for this problem because it already samples a group of rollouts per prompt, and the group's empirical distribution can be compared with the target. However, scoring the group as a whole gives every rollout the same reward. Group-relative centering then sets all advantages to zero, and the model receives no learning signal. To give each rollout its own signal, we introduce the \textit{witness advantage}, a per-rollout advantage derived from maximum mean discrepancy (MMD). It trains a model to match a target distribution over a finite set of outcomes. The MMD between the model's distribution and the target has a witness function that measures how over- or under-produced each outcome is. Each rollout's advantage estimates the negative witness at its outcome, so a rollout is rewarded for an outcome the group under-produces and penalized for one it over-produces. The witness advantage is computed in closed form from the group's outcome counts, and we use it as the reward in GRPO. On unseen target distributions, training with the witness advantage substantially reduces the total variation distance to the target while largely preserving the model's general capabilities.

%% file: main_files/kintro.tex
Many real-world applications of a language model require it to generate samples from a specified distribution, for instance when simulating survey respondents \citep{santurkar2023whose,meister2025benchmarking}, choosing actions according to a specified policy \citep{misaki2025ssot}, or generating names, numbers, or coin flips \citep{zhang2024forcing,vankoevering2024random}.

However, current models can identify the specified distribution but fail to sample from it. In our experiments, we evaluate an instruction-tuned model, Qwen2.5-1.5B-Instruct \citep{yang2024qwen25}. It correctly identifies the family and parameters of the specified distribution for five of six common distribution families (Figure~\ref{fig:phenomenon}b), yet its samples remain far from that distribution. For a biased coin with $P(\text{Heads})=0.005$, the model places probability $0.678$ on Heads (Figure~\ref{fig:phenomenon}a). The error is therefore in the model's output probabilities, not in the sampling step, so changing the decoding rule can only partly correct it. The best training-free method we test removes only $19\%$ of this error (Section~\ref{sec:synthetic}).
Prior work finds this mismatch across many model families \citep{meister2025benchmarking,gu2026illusion,zhao2026dice,jang2026cannotsample}. It also finds evidence that post-training moves a model's samples further from the specified distribution \citep{gu2026illusion,jang2026cannotsample}. Prompting can improve sampling from stated distributions \citep{misaki2025ssot,zhang2025verbalized}, but it has limited success in our experiments (Section~\ref{sec:synthetic}). To correct the mismatch between the stated and sampled distributions, we must therefore change the model's probabilities themselves.

Policy optimization \citep{ouyang2022training,shao2024deepseekmath,guo2025deepseekr1} is a natural way to change these probabilities. It updates a model using rewards on the model's own samples, so it suits objectives that are easy to evaluate but hard to supervise explicitly. Distribution matching fits this setting, since whether a model's samples match a target distribution can only be evaluated over a group of samples. This motivates our central question:
\vspace{-0.05in}
\begin{center}
\emph{Can policy optimization teach a language model to sample from the distributions it states, and what property of the training signal enables it to do so?}
\end{center}

\begin{figure}[t]
\centering
\includegraphics[width=\linewidth]{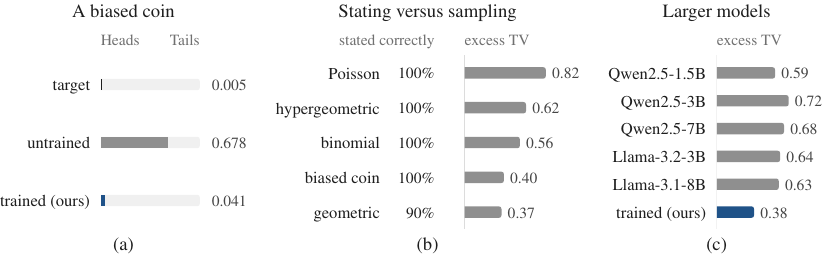}
\caption{Instruction-tuned models state distributions they cannot sample. (a)~For a coin with $P(\text{Heads})=0.005$, Qwen2.5-1.5B-Instruct places next-token probability $0.678$ on Heads, and $0.041$ after training with the witness advantage. (b)~On five families, the untrained model states the family and parameters correctly for $90\%$ to $100\%$ of targets, yet its median excess TV is $0.37$ to $0.82$. Excess TV is the total variation (TV) distance to the stated target minus the expected TV of a perfect sampler with the same number of draws. (c)~Median excess TV on the 100 unseen-family targets. Larger untrained models are no better than the 1.5B model. Training the 1.5B model with the witness advantage lowers it to $0.38$ (Section~\ref{sec:synthetic}).}
\label{fig:phenomenon}
\end{figure}

The central design choice in policy optimization is the reward. To train a model to sample from a target distribution, the reward must measure how well the model's samples match that distribution. This is challenging because a distribution is a property of many draws, whereas standard post-training rewards assign a score to each response individually. A natural approach is therefore to sample a group of outputs, compare their empirical distribution to the target distribution, and assign the resulting score to every sample in the group.

Group relative policy optimization (GRPO) \citep{shao2024deepseekmath} is well suited to this setting because it already samples a group of rollouts for each prompt. However, GRPO centers advantages within each group, so giving every rollout the same distribution-level reward makes every advantage zero (Lemma~\ref{lem:degeneracy}). One plausible workaround is to divide the rollouts into several subgroups, score each subgroup according to its empirical distribution, and center these scores across subgroups. Training with this workaround does not lower the error (Figure~\ref{fig:training}a). Supervised cross-entropy training on the target probability mass function (PMF) can make the model match the target distribution \citep{zhang2024forcing}, but in our experiments it degrades the general capabilities of the model (Section~\ref{sec:synthetic}). We therefore need per-rollout credit assignment that still reflects how well the group matches the target distribution.

To provide such a per-rollout training signal, we use the witness function of maximum mean discrepancy (MMD). MMD measures the discrepancy between two distributions, and its maximizing function, the witness function, identifies at each outcome how much it is over- or under-produced relative to the target \citep{gretton2012kernel}. On a finite outcome set with the exact-match kernel, the witness at an outcome is the difference between its model probability and its target probability. For each rollout, we estimate the negative witness function at its outcome using leave-one-out frequencies within the group, and use this estimate as the rollout's reward in GRPO. A rollout therefore receives a positive reward when its outcome is under-produced relative to the target and a negative reward when it is over-produced. Unlike a group-level reward, this signal assigns credit to individual outcomes while still reflecting the mismatch of the group as a whole. Before GRPO subtracts the group mean, the resulting update is an unbiased estimate of the negative gradient of the squared MMD between the model's outcome distribution and the target, and the subtraction adds only a bias of order $1/G$ in the group size $G$ (Proposition~\ref{prop:witness}). Section~\ref{sec:synthetic} shows that the size of this credit matters for outcomes with small target probability.

To summarize, our key contributions are as follows.
\begin{enumerate}[leftmargin=*,itemsep=0pt,topsep=1pt]

\item We show that prompting and decoding have limited success in making a model sample from the distributions it states, because the model's probabilities over outcomes are already wrong. We therefore use policy optimization to correct them (Section~\ref{sec:synthetic}).

\item We introduce the witness advantage, a per-rollout advantage that enables GRPO to learn this task, and prove that, in expectation, training with it moves the model toward the target distribution. We also show when simpler rewards fail (Section~\ref{sec:method}).

\item We show that one to three GPU-hours of training substantially improve sampling on the training distributions, on unseen parameters and distribution families, and on stated opinion distributions and urn draws, while largely preserving general capabilities (Section~\ref{sec:experiments}).

\end{enumerate}

\textbf{Related Work.} Prior work closest to ours also scores each GRPO rollout using the other rollouts in its group. The reward of GAPO \citep{anschel2025gapo} assumes a uniform target and penalizes a rollout by the fraction of the group that shares its outcome. Because this fraction always counts the scored rollout among those sharing its outcome, it is a biased estimate of the model's probability of that outcome. Distribution-aware reward \citep{park2026dar} is designed for numeric regression and scores a group of predicted numbers against the correct number for each input. Its target is a single number, whereas ours is a probability distribution. \citet{jang2026cannotsample} list rewards scored over groups of samples as future work, but do not define or evaluate such a reward. Appendix~\ref{app:related} discusses further related work.

%% file: main_files/ksetup.tex
\textbf{Notation.}
A prompt describes a target distribution $q$ over a finite set of outcomes $\mathcal{X}$ and asks the model to draw one outcome from it. The prompt specifies the family of $q$, its parameters, and usually its PMF (Appendix~\ref{app:setup}). Given the prompt, the model generates a text response $y$ with probability $P_\theta(y)$. A parser $\varphi$ extracts from each response $y$ the outcome $\varphi(y)$ that it states. The model's \emph{outcome distribution}, $\pi_\theta(x) = \sum_{y:\,\varphi(y) = x} P_\theta(y)$, is the probability that its response states outcome $x$. Our goal is to make $\pi_\theta$ match $q$. Some responses do not state a valid outcome, for example an outcome with extra words around it or a number outside the support of $q$. The parser maps them to the symbol $\bot$ (Appendix~\ref{app:setup}). We add $\bot$ to $\mathcal{X}$ and set $q(\bot)=0$.

\textbf{Error measure.}
We measure how far a distribution $p$ over outcomes is from the target $q$ with the total variation (TV) distance, $\tv(p, q) = \tfrac12 \sum_x |p(x) - q(x)|$. TV is the largest difference between the probabilities that $p$ and $q$ assign to the same set of outcomes, and it ranges from $0$ to $1$. We usually cannot compute the model's outcome distribution exactly, so we estimate its TV from the empirical distribution of $n$ sampled outcomes (Section~\ref{sec:impl}).

\textbf{GRPO.}
We train with GRPO \citep{shao2024deepseekmath}. For each prompt, GRPO samples a group of $G$ rollouts from the model, with responses $y_1,\dots,y_G$. We write $x_i = \varphi(y_i)$ for the outcome of rollout $i$, and $\hat p(x) = \#\{i : x_i = x\}/G$ for the fraction of rollouts whose outcome is $x$. GRPO scores each rollout with a scalar reward $r_i$ and sets its \emph{advantage} to the reward minus the group mean,
\begin{equation}
\tilde A_i = r_i - \mathrm{mean}(r_1,\dots,r_G).
\label{eq:grpoadv}
\end{equation}
Like \citet{liu2025understanding}, we omit the original GRPO's division by the standard deviation of the rewards (Appendix~\ref{app:training}). Training increases the probability of rollouts with a positive advantage and decreases the probability of rollouts with a negative advantage. Ignoring GRPO's PPO-style clipping \citep{schulman2017proximal} and its KL penalty, the update direction is proportional to $\tfrac1G\sum_i \tilde A_i \nabla_\theta \log P_\theta(y_i)$, where $\nabla_\theta \log P_\theta(y_i)$ is the direction in parameter space that most increases the log-probability of response $y_i$, and the update scales this direction by $\tilde A_i$. Since the reward affects the update only through $\tilde A_i$, we can change what the model learns by changing only the reward while keeping the rest of GRPO as is.

\textbf{Per-rollout rewards.}
Equation~\ref{eq:grpoadv} requires a reward $r_i$ for every rollout. In math reasoning, the task GRPO was introduced for, this reward checks whether the rollout's answer is correct. Sampling from a distribution has no correct answer to check. Every outcome with positive target probability is a valid draw. What matters is how often the model produces each outcome. An outcome is over-produced if the model's probability of it is above its target probability, and under-produced if it is below.

%% file: main_files/kwitness.tex
To train a model to sample from a target distribution, we need a function that tells us, for each outcome, whether the model produces it too often or too rarely, and by how much. Given such a function, each rollout $i$ receives an advantage computed from the function's value at its own outcome $x_i$. The advantage is positive if $x_i$ is under-produced and negative if $x_i$ is over-produced. Training then produces more of the under-produced and less of the over-produced outcomes, shifting the outcome distribution toward the target distribution. We also want this function to come from a distance between the model's outcome distribution and the target, so that we can prove training reduces this distance. The maximum mean discrepancy (MMD) of \citet{gretton2012kernel} gives such a distance, together with a function of this kind, called the witness function.

MMD measures the discrepancy between two distributions as the largest difference in the expectation of a test function under them. Let $p$ and $q$ be distributions on an outcome space $\mathcal{X}$, and let $k$ be a kernel with reproducing kernel Hilbert space $\mathcal{H}$. The MMD is
\[
\mmd(p,q)
=
\sup_{\|f\|_{\mathcal{H}} \le 1}
\left(
\E_{x \sim p}[f(x)] - \E_{x \sim q}[f(x)]
\right).
\]
The maximizing function $f^\star$, called the \emph{witness function}, is proportional to the difference between the mean embeddings of $p$ and $q$:
\[
f^\star \propto \mu_p - \mu_q,
\qquad
\mu_p := \E_{x \sim p}[k(x,\cdot)],
\quad
\mu_q := \E_{x \sim q}[k(x,\cdot)].
\]

Because our outcome space is finite, we use the exact-match kernel
$k(x,x')$, which is $1$ if $x=x'$ and $0$ otherwise, and for which the mean embedding is the PMF itself. Let $\pi_\theta(x)$ be the model probability of outcome $x$, and let $q(x)$ be its target probability. Then
\begin{equation}
\mmd^2(\pi_\theta,q)
=
\sum_{x \in \mathcal{X}}
\bigl(\pi_\theta(x) - q(x)\bigr)^2,
\qquad
f^\star(x) \propto \pi_\theta(x) - q(x).
\end{equation}
Thus, $\mmd^2$ is the squared Euclidean distance between the model and target PMFs, and it is zero only when $\pi_\theta = q$. The witness at $x$ is the model's excess probability on that outcome, $\pi_\theta(x) - q(x)$. It is positive when $x$ is over-produced, negative when it is under-produced, and its magnitude is how far the model's probability of $x$ is from its target.

The witness depends on the model's outcome distribution $\pi_\theta$, which we cannot compute exactly. Instead, we estimate it from the group of $G$ rollouts, where $x_i$ is the outcome of rollout $i$. The simplest estimate of $\pi_\theta(x_i)$ is the fraction of rollouts in the group whose outcome is $x_i$. This fraction counts rollout $i$ itself, and counting it adds a term of order $1/G$ to the expected update. The added term increases the probability that two draws from the model have the same outcome, pushing the model to concentrate its probability mass on fewer outcomes (Appendix~\ref{app:fullgroup}). We remove this term by leaving rollout $i$ out. The \emph{leave-one-out frequency} $\hat p_{-i}(x_i)$ is the fraction of the other $G-1$ rollouts whose outcome is $x_i$. Applying the same correction gives an unbiased estimator of $\mmd^2$ \citep{gretton2012kernel,binkowski2018demystifying}.

We then take the negative of the witness, so an under-produced outcome receives a positive advantage and an over-produced outcome receives a negative advantage. Rollout $i$ with outcome $x_i$ receives the \emph{witness advantage}
\begin{equation}
A_i = 2\,\big(q(x_i) - \hat p_{-i}(x_i)\big),
\label{eq:witness}
\end{equation}
where $q(x_i)$ is the target probability of the rollout's outcome, and $\hat p_{-i}(x_i)$ is how often the other rollouts produced that outcome. A rollout receives a positive advantage when its outcome appears less often among the other rollouts than its target probability, and a negative advantage when it appears more often. The size of the advantage is twice the mismatch at that outcome. The factor of two makes the expected update equal to the negative gradient of $\mmd^2$ (Section~\ref{sec:optimizes}).

The witness advantage needs only the outcome counts of the group. We use $A_i$ as the reward $r_i$ in GRPO and leave the rest of GRPO unchanged. GRPO therefore still subtracts the group mean (Equation~\ref{eq:grpoadv}) and trains on $\tilde A_i = A_i - \bar A$, where $\bar A$ is the mean of $A_i$ over the group. Throughout the paper, $A_i$ denotes an advantage before this subtraction and $\tilde A_i$ the advantage after it.

One could instead train on the total variation distance, our error measure (Section~\ref{sec:setup}). It has the same form, $2\,\tv(p,q) = \sup_{|f| \le 1} (\E_{p} f - \E_{q} f)$, and the same minimizer, $\pi_\theta = q$. However, its witness is only the sign of $\pi_\theta - q$. The sign indicates whether a rollout's outcome is too frequent or too rare, but not by how much. Section~\ref{sec:simpler} shows that without the size, training does not reach the target on outcomes with small target probability.

%% file: main_files/kwitnessopt.tex
Recall from Section~\ref{sec:setup} that, ignoring clipping and the KL penalty, a GRPO step moves the parameters along $\tfrac1G \sum_i \tilde A_i \nabla_\theta \log P_\theta(y_i)$. Each response $y_i$ gains log-probability in proportion to $\tilde A_i$. Let $U$ be this direction with $\tilde A_i$ replaced by the witness advantage $A_i$ of Equation~\ref{eq:witness}, and let $U_{\mathrm{c}}$ be the direction GRPO applies, with $\tilde A_i = A_i - \bar A$. Expectations are over the $G$ rollouts of a group, which are independent draws from the model (Appendix~\ref{app:tools}). Proposition~\ref{prop:witness} computes the expected update.

\begin{restatable}[The witness advantage follows the negative gradient of $\mmd^2$]{proposition}{propwitness}
\label{prop:witness}
Let every rollout in a group of $G \ge 2$ rollouts receive the witness advantage. Then $\E[U] = -\nabla_\theta \mmd^2(\pi_\theta, q)$, and $\mmd^2(\pi_\theta,q)=0$ only at $\pi_\theta = q$. After the group mean is subtracted,
\begin{equation}
\E[U_{\mathrm{c}}] = -\tfrac{G-1}{G}\,\nabla_\theta \mmd^2(\pi_\theta,q) + \tfrac{1}{G}\,\nabla_\theta \|\pi_\theta\|^2 ,
\label{eq:centered}
\end{equation}
where $\|\pi_\theta\|^2 = \sum_x \pi_\theta(x)^2$ is the probability that two independent draws from $\pi_\theta$ give the same outcome\proofin{ (proof in Appendix~\ref{app:witness})}.
\end{restatable}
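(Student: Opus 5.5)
The plan is to compute $\E[U]$ directly. The key tool is the score-function identity $\E_{y\sim P_\theta}[g(\varphi(y))\,\nabla_\theta \log P_\theta(y)] = \sum_x g(x)\,\nabla_\theta \pi_\theta(x)$, valid for any function $g$ of the outcome that does not depend on $y$. It follows by writing the expectation as $\sum_y g(\varphi(y))\nabla_\theta P_\theta(y)$ and grouping responses by their outcome.

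\textbf{Uncentered update.} Fix a rollout $i$. The witness advantage $A_i = 2(q(x_i) - \hat p_{-i}(x_i))$ depends on $y_i$ only through $x_i$. It depends on the other rollouts only through $\hat p_{-i}$, and these are independent of $y_i$. I will first condition on $y_i$ and take the expectation over the other $G-1$ rollouts. Since $\E[\hat p_{-i}(x)\mid y_i] = \pi_\theta(x)$ for every fixed $x$, this gives $\E[A_i \nabla\log P_\theta(y_i)] = \E_{y_i}[2(q(x_i)-\pi_\theta(x_i))\nabla \log P_\theta(y_i)]$. Here the leave-one-out construction is what makes the estimate exactly unbiased. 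Applying the score identity then gives $\sum_x 2(q(x)-\pi_\theta(x))\nabla\pi_\theta(x) = -\nabla_\theta \sum_x(\pi_\theta(x)-q(x))^2$, because $q$ does not depend on $\theta$. Averaging over $i$ yields $\E[U] = -\nabla\mmd^2$. The claim that $\mmd^2 = 0$ only at $\pi_\theta=q$ is immediate, since $\mmd^2$ is a squared Euclidean norm.

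\textbf{Centered update.} We have $U_{\mathrm c} = U - \bar A\cdot\tfrac1G\sum_i \nabla\log P_\theta(y_i)$. Expanding $\bar A = \tfrac1G\sum_j A_j$ gives the correction $\tfrac1{G^2}\sum_{i,j}\E[A_j\nabla\log P_\theta(y_i)]$. I will split this sum into three kinds of terms.

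\emph{Diagonal terms ($j=i$).} These are exactly the terms computed above, so their sum is $\tfrac1{G}\cdot(-\nabla\mmd^2)$.

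\emph{Off-diagonal terms ($j\ne i$).} Here $A_j = 2(q(x_j) - \hat p_{-j}(x_j))$, and $y_i$ enters only through the single indicator $\tfrac{1}{G-1}\mathbf 1[x_i = x_j]$ inside $\hat p_{-j}(x_j)$. Every part of $A_j$ that does not involve $y_i$ contributes zero, since $\E[\nabla\log P_\theta(y_i)]=0$ and these parts are independent of $y_i$. The remaining contribution is
\[
-\tfrac{2}{G-1}\E[\mathbf 1[x_i=x_j]\nabla\log P_\theta(y_i)] = -\tfrac{2}{G-1}\sum_x \pi_\theta(x)\nabla\pi_\theta(x) = -\tfrac{1}{G-1}\nabla\|\pi_\theta\|^2 .
\]
To get this, I condition on $x_j$ and apply the score identity with $g=\mathbf 1[\cdot = x_j]$.

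\emph{Combining.} There are $G(G-1)$ ordered off-diagonal pairs, so they contribute $\tfrac{1}{G^2}\cdot G(G-1)\cdot(-\tfrac1{G-1})\nabla\|\pi\|^2 = -\tfrac1G\nabla\|\pi\|^2$ to the subtracted quantity. Hence
\[
\E[U_{\mathrm c}] = -\nabla\mmd^2 + \tfrac1G\nabla\mmd^2 + \tfrac1G\nabla\|\pi_\theta\|^2,
\]
which matches the proposition.

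The main obstacle is the off-diagonal bookkeeping. One must check carefully that only the $y_i$-dependent indicator survives, and track the $1/(G-1)$ and $1/G^2$ factors and the signs. Everything else is direct application of the score identity. Finally, $\|\pi_\theta\|^2$ is the collision probability by definition, since $\sum_x \pi_\theta(x)^2 = \Pr[x=x']$ for independent $x,x'\sim\pi_\theta$.
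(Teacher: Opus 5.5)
Your proof is correct and follows essentially the same route as the paper. The paper likewise shows the uncentered update is unbiased because the leave-one-out frequency has conditional mean $\pi_\theta(x_i)$. It writes $U_{\mathrm c} = U - \bar A\,\bar g$ and notes that the diagonal terms each give $-\nabla_\theta\mmd^2$, while in every off-diagonal term only the single indicator $\mathbb{1}[x_i = x_j]$ survives, contributing $-\tfrac{1}{G-1}\nabla_\theta\|\pi_\theta\|^2$. The one cosmetic difference is that the paper first reduces response scores to outcome scores through a separate lemma and packages the indicator computations as pair-moment identities, whereas you apply the grouped score identity inline at the response level.
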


Before the group mean is subtracted, the expected update is the negative gradient of $\mmd^2$ at every group size. In expectation, training is gradient descent on the squared distance between the model's outcome distribution and the target, and this distance is zero only at the target. Subtracting the group mean multiplies the gradient by $(G-1)/G$. It also adds a term of weight $1/G$ that increases $\|\pi_\theta\|^2$, and so pushes the model to concentrate its probability on fewer outcomes.

Because the concentration term has weight only $1/G$, it shifts the minimizer of the training objective only slightly away from the target. If the model can assign any probability to each outcome and $G \ge 3$, the shift is at most $1/(G-2)$ in TV, which is $0.016$ at $G=64$ (Appendix~\ref{app:witnessfixed}).

%% file: main_files/ksimplerrewards.tex
With the witness advantage, rollouts in the same group receive different advantages, because each advantage depends on the rollout's outcome. We call this \emph{per-outcome credit}. Its sign says whether the outcome is under- or over-produced, and its size grows with the mismatch. To test whether training needs advantages that vary across outcomes and reflect the size of the mismatch, we analyze two simpler rewards, each of which removes one of these properties.

\textbf{Group-scalar reward.}
The most direct way to reward a group for matching the target distribution is to score the group by $-\tv(\hat p, q)$, where $\hat p$ is the empirical distribution of its outcomes, and to give this score to every rollout in the group (Section~\ref{sec:intro}). We call this the \emph{group-scalar reward}. It gives GRPO no training signal, because every rollout in the group receives the same reward and subtracting the group mean leaves all advantages at zero (Lemma~\ref{lem:degeneracy}, Appendix~\ref{app:scalar}). To obtain a nonzero signal, we score several groups per prompt and subtract the mean score across groups. Specifically, we sample $256$ rollouts for one prompt and split them into four subgroups of $64$. Every rollout receives the score of its subgroup minus the mean score of the four subgroups.

This repaired reward optimizes a biased objective. In expectation, training with it performs gradient descent on $\E[\tv(\hat p, q)]$, the expected TV between the target and the empirical distribution of a group of $G$ draws from the model (Proposition~\ref{prop:scalar} in Appendix~\ref{app:scalar}). This objective is not the model's own error $\tv(\pi_\theta, q)$. Even a model that samples exactly from $q$ has an expected TV of order $\frac{1}{\sqrt G}\sum_x \sqrt{q(x)(1-q(x))}$ when $q(x) \ge 1/G$ on the support of $q$, because $G$ draws do not reproduce $q$ exactly. The minimizers of the objective are therefore only guaranteed to be within this distance of $q$, and for some targets, the objective is lower at a model that never produces outcomes with $q(x) < 1/G$ than at $q$ itself.

\textbf{Sign witness.}
The \emph{sign witness} keeps per-outcome credit and removes its size. It gives each rollout the sign of its witness advantage, $\mathrm{sign}(q(x_i) - \hat p_{-i}(x_i)) \in \{-1,0,+1\}$. The model then learns whether each outcome is too rare or too frequent among the other rollouts, but not by how much. To state the properties of the sign witness, we compare it with the value it would take if the group estimated $\pi_\theta$ exactly. We call $\mathrm{sign}(q(x) - \pi_\theta(x))$ the \emph{population sign} of outcome $x$. We call $x$ \emph{low-mass} if $0 < q(x) < 1/(G-1)$. Since a leave-one-out frequency is computed from $G-1$ rollouts, $1/(G-1)$ is the smallest nonzero value it can take. Proposition~\ref{prop:sign} shows that the sign witness trains the model toward the target, except on low-mass outcomes.

\begin{restatable}[The sign witness follows the negative gradient of total variation]{proposition}{propsign}
\label{prop:sign}
Let every rollout in a group of $G \ge 2$ rollouts receive the sign witness. Then the following hold\proofin{ (proof in Appendix~\ref{app:sign})}.
\begin{enumerate}[label=(\roman*),leftmargin=*,itemsep=0pt,topsep=1pt]
\item The population sign is the negative of the witness of total variation. If $q(x) \ne \pi_\theta(x)$ for every outcome $x$, the expected update with the population sign is $-\nabla_\theta\, 2\tv(\pi_\theta,q)$.
\item Given that a rollout has an outcome $x$ with $\delta = q(x) - \pi_\theta(x) \ne 0$, its sign witness and the population sign differ with probability at most $\exp(-2(G-1)\delta^2)$.
\item A rollout with a low-mass outcome $x$ receives expected advantage $2(1-\pi_\theta(x))^{G-1} - 1$, independent of $q(x)$. If the model can assign any probability to each outcome, every stationary point of the expected update therefore gives the same probability to all low-mass outcomes that it does not set to zero. In particular, $\pi_\theta = q$ is not a stationary point when two low-mass outcomes have different target probabilities.
\end{enumerate}
\end{restatable}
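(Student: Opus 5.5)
The plan is to reduce all three parts to the score-function identity, applied conditionally on each rollout's outcome. Because the rollouts are independent draws (Appendix~\ref{app:tools}), for any per-rollout advantage $a_i$ we have
\[
\E\!\left[\tfrac1G\sum_i a_i \nabla_\theta \log P_\theta(y_i)\right] = \sum_{x} g(x)\,\nabla_\theta \pi_\theta(x),
\qquad g(x) := \E[a_i \mid x_i = x].
\]
This holds whenever $a_i$ depends on $y_i$ only through $x_i$, together with the other rollouts. To see it, condition on the other $G-1$ rollouts, which are independent of $y_i$. Then use $\sum_{y:\varphi(y)=x} P_\theta(y)\nabla_\theta\log P_\theta(y) = \nabla_\theta \pi_\theta(x)$. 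This is the same tool I expect the proof of Proposition~\ref{prop:witness} to rest on, so I will invoke it directly. Throughout, "expected update" refers to the update before the group mean is subtracted, as in the statement.

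For (i), I first verify that $f(x) = \mathrm{sign}(\pi_\theta(x) - q(x))$ attains the supremum in $2\,\tv(\pi_\theta,q) = \sup_{|f|\le 1}(\E_{\pi_\theta} f - \E_q f)$. It does, since it turns $\sum_x f(x)(\pi_\theta(x)-q(x))$ into $\sum_x |\pi_\theta(x)-q(x)|$. The population sign $\mathrm{sign}(q(x)-\pi_\theta(x))$ is therefore the negative witness. When no outcome has $q(x)=\pi_\theta(x)$, each term $|\pi_\theta(x)-q(x)|$ is differentiable. Its gradient is $\mathrm{sign}(\pi_\theta(x)-q(x))\,\nabla_\theta\pi_\theta(x)$. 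Plugging $g(x)=\mathrm{sign}(q(x)-\pi_\theta(x))$ into the identity above then gives $-\nabla_\theta\,2\tv(\pi_\theta,q)$.

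For (ii), condition on $x_i = x$. The count of $x$ among the other rollouts is $\mathrm{Binomial}(G-1,\pi_\theta(x))$, independent of $y_i$, so $\hat p_{-i}(x)$ is a mean of $G-1$ Bernoulli variables with mean $\pi_\theta(x)$. Suppose $\delta>0$. Then the sign witness differs from the population sign $+1$ only if $\hat p_{-i}(x) \ge q(x)$, which means $\hat p_{-i}(x) - \pi_\theta(x) \ge \delta$. Suppose instead $\delta<0$. Then it differs only if $\hat p_{-i}(x) - \pi_\theta(x) \le \delta$. In either case, ties that give sign $0$ are included in these events. The one-sided Hoeffding inequality bounds each event by $\exp(-2(G-1)\delta^2)$.

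For (iii), a low-mass $x$ has $0<q(x)<1/(G-1)$, so the comparison is decided by whether the count is zero. A zero count gives $\hat p_{-i}(x)=0<q(x)$ and sign $+1$. A nonzero count gives $\hat p_{-i}(x)\ge 1/(G-1)>q(x)$ and sign $-1$. Hence
\[
g(x) = (1-\pi_\theta(x))^{G-1} - \bigl(1-(1-\pi_\theta(x))^{G-1}\bigr) = 2(1-\pi_\theta(x))^{G-1}-1,
\]
and $q(x)$ does not appear.

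To interpret "the model can assign any probability to each outcome", I will parametrize $\pi$ by free softmax logits $z_x$. The expected update in logit coordinates is then
\[
\pi(x)\Bigl(g(x)-\textstyle\sum_{x'}\pi(x')g(x')\Bigr).
\]
It vanishes if and only if $g$ is constant on the support of $\pi$. Since $h(p)=2(1-p)^{G-1}-1$ is strictly decreasing on $[0,1]$ for $G\ge2$, equal values of $g$ among low-mass outcomes with $\pi(x)>0$ force equal probabilities. Finally, at $\pi=q$ every low-mass outcome has positive probability. Two of them with different $q$ values would therefore have different $g$ values, so $\pi=q$ is not a stationary point.

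I expect the main obstacle to be the stationarity claim in (iii). The definition of "stationary point of the expected update" has to be made precise, and the natural one is the zero of the vector field in logit coordinates. The subtlety is that $g$ depends on $\pi_\theta$, so the expected update is not the gradient of a fixed objective. The argument must therefore work with the vector field directly rather than with a potential.
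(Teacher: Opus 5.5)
Your proposal is correct and follows the paper's proof essentially step for step. Part (i) uses the dual form of TV together with the score identity obtained by conditioning on the other rollouts. Part (ii) uses the conditional $\mathrm{Binomial}(G-1,\pi_\theta(x))$ count and Hoeffding's inequality. Part (iii) uses the grid argument that gives $2(1-\pi_\theta(x))^{G-1}-1$, followed by the softmax vector field $\pi(x')\bigl(g(x')-\sum_{x}\pi(x)\,g(x)\bigr)$. The paper resolves the obstacle you flag in the way you suggest: it reads this field on the closed simplex, so outcomes set to zero impose no condition, and stationarity means $g$ is constant on the support.
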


Parts (i) and (ii) show that, while the model is still far from the target, the sign witness trains it toward the target. With the population sign, the expected update is the negative gradient of twice the TV. The sign computed from the group matches the population sign with high probability, except at outcomes where $q(x)$ and $\pi_\theta(x)$ differ by less than about $1/\sqrt G$.

Part (iii) shows that the sign witness does not train the probability of a low-mass outcome toward its target. The leave-one-out frequency of a low-mass outcome is either $0$, which is below its target probability, or at least $1/(G-1)$, which is above it. The sign therefore depends only on whether the outcome appeared among the other $G-1$ rollouts, not on how much probability the outcome should have. As a result, an outcome with target probability $0.001$ keeps a positive expected advantage until the model assigns it about $\ln 2/(G-1) = 0.011$ at $G=64$, about ten times its target (Appendix~\ref{app:sign}). Section~\ref{sec:synthetic} checks these analyses in training.

%% file: main_files/kimplementationdetails.tex
\textbf{Targets.}
We use six families of discrete distributions from the Spectrum Suite \citep{sorensen2026spectrum}: biased coin, binomial, geometric, Poisson, hypergeometric, and Zipf. Each family consists of $20$ targets that differ in their parameters. The training set has $16$ targets from each family except hypergeometric, $80$ in all. The \emph{unseen parameters} are the four remaining targets of each training family. The \emph{unseen families} are $100$ targets from five families not used in training: hypergeometric, number of empty boxes, discrete triangular, maximum of dice, and logarithmic series (Appendix~\ref{app:setup}). A model cannot know the possible outcomes of a family it has never seen, so the prompts for these targets list the valid outcomes.

\textbf{Metric.}
To evaluate a model on a target, we sample $n$ responses at temperature $1$, with $n=500$ (up to $5{,}000$ for unseen parameters), and compute the TV between $q$ and the empirical distribution $\hat p_n$ of the parsed outcomes. Unlike training (Section~\ref{sec:setup}), evaluation leaves unparseable responses out of $\hat p_n$ and reports their rate separately.

Even a \emph{perfect sampler}, one that draws exactly from $q$, has positive TV between its empirical distribution and $q$ after any finite number of draws. This TV is larger for targets with more outcomes and for smaller $n$, so raw TV values cannot be compared across targets. We therefore report \emph{excess TV}, the measured TV minus the perfect sampler's expected TV at the same $n$, which we estimate by Monte Carlo. An excess TV of $0$ means the model samples as well as a perfect sampler. We also measure general capabilities with MMLU \citep{hendrycks2021measuring} and IFEval \citep{zhou2023instruction}. For each benchmark, we compute a $95\%$ bootstrap confidence interval for the untrained model's score. A trained score is within the noise band if it lies inside this interval.

\textbf{Implementation details.}
All GRPO runs fine-tune Qwen2.5-1.5B-Instruct in TRL on one 48\,GB GPU, with group size $G=64$, $256$ rollouts per step, learning rate $2\times10^{-6}$, KL weight $0.02$, and temperature $1$. A main run on the synthetic distributions has $600$ steps and takes about one GPU-hour (Appendix~\ref{app:training}).

\textbf{Baselines.}
We compare against three kinds of baselines (details in Appendices~\ref{app:training} and~\ref{app:alternatives}). \emph{Simpler rewards} replace only the reward, with the group-scalar reward or the sign witness of Section~\ref{sec:simpler}. \emph{Supervised training} minimizes the cross-entropy $-\sum_x q(x)\log \pi_\theta(x)$ to the stated PMF on the same targets \citep{zhang2024forcing}. Computing this loss requires the model's probability of every outcome the target can produce. \emph{Training-free baselines} change only the prompt or the decoding: verbalized sampling \citep{zhang2025verbalized}, seed conditioning applied only at inference \citep{nagarajan2025dice}, renormalization onto the valid outcomes, and temperature scaling.

%% file: main_files/ksynthfamilies.tex
\begin{table}[t]
\centering
\caption{On the 100 targets from five distribution families never seen in training, the witness advantage has the lowest median excess TV and keeps general capabilities. The oracle temperature is tuned on each target.}
\label{tab:synthetic}
\small
\setlength{\tabcolsep}{10pt}
\setlength{\aboverulesep}{0pt}
\setlength{\belowrulesep}{0pt}
\setlength{\cmidrulekern}{4pt}
\renewcommand{\arraystretch}{1.25}
\begin{tabular}{lccc}
\toprule
 & sampling error & \multicolumn{2}{c}{capability} \\
\cmidrule(lr){2-2}\cmidrule(lr){3-4}
 & excess TV & MMLU & IFEval \\
\midrule
untrained model & 0.589 & 0.584 & 0.512 \\
\midrule
verbalized sampling & 0.518 & 0.584 & 0.512 \\
oracle temperature & 0.478 & 0.584 & 0.512 \\
\midrule
supervised cross-entropy & 0.406 & 0.482 & 0.281 \\
\oursrow witness advantage (ours) & 0.376 & 0.583 & 0.528 \\
\bottomrule
\end{tabular}
\end{table}

Training with the witness advantage makes the model sample much closer to the target distributions, on the training targets and on unseen parameters and unseen families (Table~\ref{tab:synthfull} in Appendix~\ref{app:more}). On the five families never seen in training, it has the lowest error of all compared methods and keeps general capabilities (Table~\ref{tab:synthetic}).

\textbf{Witness advantage fits the training targets and keeps capability.}
Training removes $79\%$ of the excess TV on the non-Zipf training targets, from $0.477$ to $0.101$. MMLU and IFEval stay inside their noise bands, and fewer than $1\%$ of responses are unparseable. The strongest baseline is supervised cross-entropy training on the target PMF. It fits the training targets better than the witness advantage, but it drops MMLU by $10$ points and IFEval by $23$ points relative to the untrained model. The witness advantage changes them by only $0.1$ and $1.6$ points. Most of the remaining error of the witness advantage is on binomial and Poisson targets, where the draws are far more spread out than the target (Appendix~\ref{app:decomposition}).

\textbf{Training-free baselines start from wrong probabilities.}
We evaluate the training-free baselines on the unseen families (Tables~\ref{tab:synthetic} and~\ref{tab:trainingfree}). The best of them, an oracle temperature per target chosen with knowledge of $q$, removes only $19\%$ of the excess TV. These baselines change only the prompt or the decoding, so they start from the untrained model's probabilities over outcomes. These probabilities are wrong even though the model knows the target (Figure~\ref{fig:phenomenon}b). For the $27$ training targets with at most $12$ outcomes, we compute the untrained model's outcome distribution exactly from its log-probabilities. Its median TV to the target is $0.37$, so the error is present before any outcome is drawn and is not sampling noise. Training with the witness advantage lowers this exact TV to $0.09$. A decoding rule can only reshape the wrong probabilities. Temperature can only sharpen or flatten them, and renormalizing onto the valid outcomes does not help ($0.600$ against $0.589$). Verbalized sampling also fails, because the probabilities the model writes out are as wrong as its draws, with a median TV of $0.71$ per parsed list. Appendix~\ref{app:alternatives} gives the other baselines and the larger untrained models.

\textbf{The improvement transfers to unseen parameters and families.}
Training removes $67\%$ of the excess TV on the $16$ unseen-parameter targets outside Zipf, from $0.483$ to $0.161$. Cross-entropy transfers better here ($0.094$), at the capability cost above.

The trained model also samples better from the five families it never saw in training, and removes $36\%$ of their excess TV (Table~\ref{tab:synthetic}). Supervised cross-entropy was trained on the same prompts, so the comparison in the table is like for like. These training prompts, unlike the unseen-family prompts, do not list the valid outcomes. A retrain on prompts that list them still removes $24\%$, more than the $19\%$ of the oracle temperature, and improves every unseen family (Appendix~\ref{app:more}). Most of the improvement does not come from copying the listed outcomes. When we remove the list from the evaluation prompts, three quarters of the improvement remains (Appendix~\ref{app:attribution}).

\textbf{Training needs per-outcome credit of the right size.}
\begin{figure}[t]
\centering
\includegraphics[width=\linewidth]{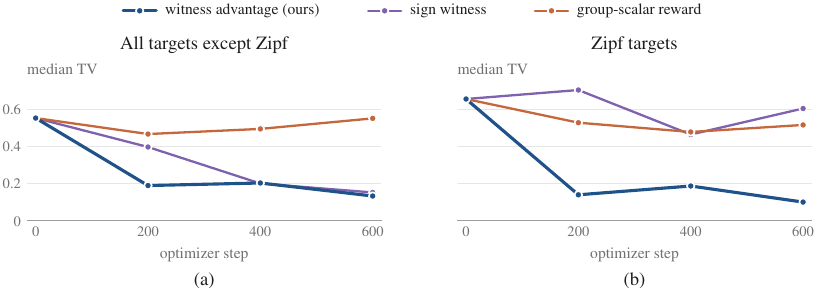}
\caption{Median TV during training on (a)~the 64 training targets outside the Zipf family and (b)~the 16 Zipf targets, which have many low-mass outcomes. Outside Zipf, the sign witness trains as well as the witness advantage, and the group-scalar reward ends where the untrained model starts. On Zipf targets, only the witness advantage brings the TV close to zero.}
\label{fig:training}
\end{figure}
Section~\ref{sec:simpler} shows that the group-scalar reward optimizes a biased objective, and predicts that the sign witness trains like the witness advantage except on low-mass outcomes. With the same data and compute, the group-scalar reward does not lower the median TV on the non-Zipf training targets. It ends at $0.55$, where the untrained model starts (Figure~\ref{fig:training}a), although some families improve (Table~\ref{tab:families}). On unseen parameters, the witness advantage has lower TV than the group-scalar reward in each of three seeds (Table~\ref{tab:paired}). On unseen families the two rewards are closer, and their intervals overlap at every group size in a sweep (Table~\ref{tab:groupsize}).

The sign witness trains as well as the witness advantage on every family except Zipf. Its final median TV on the non-Zipf training targets (Figure~\ref{fig:training}a) is $0.151$, against $0.132$ for the witness advantage. On $36$ unseen targets outside Zipf, its paired difference from the witness advantage is indistinguishable from zero (Table~\ref{tab:paired}). Zipf targets have many low-mass outcomes, those with target probability below $1/(G-1) \approx 0.016$, which the sign witness does not train toward their targets (Proposition~\ref{prop:sign}(iii)). On the Zipf training targets, the median TV ends at $0.60$ with the sign witness and at $0.10$ with the witness advantage (Figure~\ref{fig:training}b). The size of the credit is therefore needed for low-mass outcomes.

%% file: main_files/kopiniondist.tex
\begin{figure}[t]
\centering
\includegraphics[width=\linewidth]{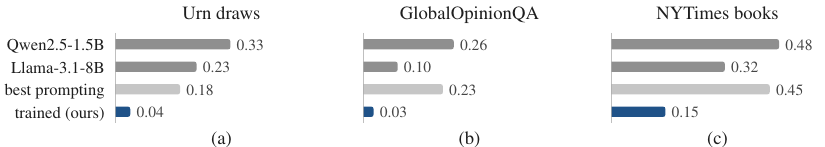}
\caption{Training on stated opinion distributions and urn draws beats prompting and a larger model, and it transfers to a task never seen in training. Bars are median excess TV to the target distribution. (a,~b)~Test prompts of the two training tasks. (c)~The NYTimes task, never seen in training. Best prompting is the better of verbalized sampling and in-context steering, which puts ten draws from the target in the prompt. Trained values are means over three seeds.}
\label{fig:opinions}
\end{figure}

Along with synthetic distributions, we also test our method on tasks closer to the real world, namely opinion distributions and structured outputs, with the same error measure and training procedure. Appendix~\ref{app:real} gives the full setup and results.

\textbf{Training transfers to test prompts and to an unseen task.}
Three tasks from the Spectrum Suite \citep{sorensen2026spectrum} state a distribution over at most six answer options and ask for one draw. Urn draws state color proportions, GlobalOpinionQA \citep{durmus2023global} gives country-level answer distributions to opinion questions, and the NYTimes task gives book preferences by demographic group \citep{meister2025benchmarking}. We train on urn and GlobalOpinionQA prompts and evaluate on test prompts of both (Appendix~\ref{app:real}), on the NYTimes task, never seen in training, and on an implicit variant of the test prompts. The implicit variant removes the stated probabilities, so the model must infer the distribution from the context, such as the balls in the urn or the respondent's country.

Training lowers the error on every evaluation set (Figure~\ref{fig:opinions} and Table~\ref{tab:opinions}). Across three seeds, it removes $80\%$ to $92\%$ of the excess TV on urn and GlobalOpinionQA test prompts, $61\%$ to $72\%$ on the unseen NYTimes task, and $56\%$ to $69\%$ on the implicit variant. It also has lower excess TV than the best prompting method on every set of Figure~\ref{fig:opinions}. On test prompts, the best temperature over $\{0.7, 1.0, 1.3, 1.5\}$ reaches $0.25$ on both tasks, against $0.04$ and $0.03$ after training. On every evaluation set, the trained 1.5B model is also closer to the stated distributions than the untrained Llama-3.1-8B-Instruct. This training has a cost. MMLU falls by $2.1$ points, outside its noise band, while IFEval stays inside its band.

\textbf{Training also improves structured outputs.}
A task adapted from sibling discovery \citep{nagarajan2025dice} names a parent and its children and asks for one pair of siblings, so the model must build the valid pairs before sampling among them. On targets uniform over the pairs, training removes $35\%$ and $39\%$ of the excess TV in two seeds, compared with $5\%$ for the best decoding baseline. On targets that state a non-uniform probability for each pair, it removes $47\%$ (Appendix~\ref{app:structured}).

%% file: main_files/kdiscussion.tex
We show that policy optimization, specifically GRPO, can reduce the mismatch between a language model's stated and sampled distributions, but only when the training signal assigns credit at the level of individual outcomes. A scalar reward for an entire group is insufficient: under GRPO, it is eliminated by group-relative centering. We propose the \textit{witness advantage}, which instead converts the group-level distribution mismatch into per-rollout credit, increasing under-produced outcomes and decreasing over-produced ones. Training with this credit substantially improves sampling on the synthetic training distributions, transfers to unseen parameters and families, and works on tasks closer to the real world, while largely preserving general capabilities.

More broadly, our results suggest that group-defined objectives require more than a group-level score: they require a mechanism for attributing the group's error to individual rollouts. We study this principle for finite outcome spaces, though the witness advantage can be defined for any kernel. An important next step is to apply it to free-form text, where a target distribution may be represented by example responses and a kernel can encode similarity between responses. This would extend distribution-matching post-training to settings in which the outcome space cannot be enumerated.

%% file: main_files/kappendix.tex
\section{Related Work}
\label{app:related}
\label{sec:related}
\paragraph{Sampling failures of language models.}
Language models are poor random number generators \citep{vankoevering2024random,zhang2024forcing,zhao2026dice}, and aligned models do worse than their base models at random number generation and mixed-strategy games \citep{west2025base}. \citet{jang2026cannotsample} show that instruction-tuned models collapse to one answer when asked for a draw, that the collapse widens at every stage of post-training, and that the same models describe the target correctly in one call. \citet{xiao2025flipping} find the same mismatch between describing and sampling a Bernoulli distribution, and \citet{meister2025benchmarking} and \citet{gu2026illusion} find it on opinion distributions and in frontier models. One line of remedies acts at inference time. These methods prompt for a seed string before the draw \citep{misaki2025ssot}, randomize surface features of the prompt across calls \citep{jang2026cannotsample}, or have the model accept or reject proposed draws in natural language, a verbalized form of rejection sampling \citep{xiao2025flipping}. Another line fine-tunes the model with a supervised loss. \citet{zhang2024forcing} minimize the cross-entropy between the model's distribution over valid outputs and the target distribution, and our supervised baseline follows their objective (Appendix~\ref{app:training}). \citet{baldelli2026calibration} fine-tune on prompts that name a distribution family and its parameters. They train either on next-token probabilities computed from the target distribution or on completions sampled from it, and their models sample more accurately than the same models before fine-tuning on families and parameter settings not seen in training. \citet{jang2026cannotsample} name two training-time directions that their evidence calls for, a loss that matches a stated output distribution and rewards scored over groups of samples, and do not run them. We train with policy optimization and a reward of the second kind, and we show that training needs advantages that differ across outcomes and grow with the mismatch at each outcome (Section~\ref{sec:simpler}).

\paragraph{Mode collapse and diversity after alignment.}
Post-training reduces output diversity \citep{kirk2024understanding,padmakumar2024writing,murthy2025onefish}, and language models produce homogeneous outputs \citep{jiang2025hivemind,zhang2025noveltybench}. Proposed causes include typicality bias in preference data \citep{zhang2025verbalized} and sharpening \citep{gai2025smoothing}, the mechanism that \citet{huang2025sharpening} study in self-improvement. Preference learning also implicitly aggregates the differing preferences of annotators \citep{siththaranjan2024distributional}. In reinforcement learning for reasoning, the collapse appears as falling entropy and pass@$k$ \citep{cui2025entropy,he2025unlikely}. Remedies restore diversity without a target \citep{he2025unlikely,li2025darling,li2025choice,nagarajan2025dice,wu2025modeconditioning,springer2026annotations}. Verbalized sampling \citep{zhang2025verbalized} asks for several candidates with probabilities in one call. It is evaluated mainly on diversity and on distributions the model already holds, such as pretraining frequencies, and its only instructed target is a fair die. We evaluate against arbitrary PMFs stated in the prompt. A stated target must counteract collapse while preserving a specific distribution, so diversity restoration is not a substitute. A stated PMF can require less randomness as well as more, as for the coin of Figure~\ref{fig:phenomenon}a.

\paragraph{Distribution matching and group-level rewards in post-training.}
GFlowNet fine-tuning \citep{hu2024amortizing} and distributional control \citep{khalifa2021distributional} fine-tune a model toward a target density over sequences that the task defines through a reward or constraints, and twisted sequential Monte Carlo \citep{zhao2024twisted} samples from such a density at inference time. Distributional control can define the density through moment constraints, such as a stated fraction of outputs with some attribute. In all three methods the task defines the target, and the prompt does not state it as a PMF. DMPO \citep{li2026dmpo} and FlowRL \citep{zhu2025flowrl} match distributions induced by a scalar reward and have no access to a stated PMF. Several recent works make a rollout's GRPO reward depend on its group, and ours is closest to them. GAPO \citep{anschel2025gapo} rewards each rollout by one minus the signed difference between the group frequency of its output and a uniform target over the valid options. For a uniform target, this reward is an affine function of the same signed deviation as our witness advantage, with two differences. Its reward is defined for a uniform target and takes no stated non-uniform PMF, and the frequency counts the rollout itself, the self-count that the leave-one-out frequency removes (Appendix~\ref{app:fullgroup}). Distribution-aware reward \citep{park2026dar} scores a group of numeric rollouts with a proper scoring rule against a scalar label and pays each rollout its leave-one-out marginal contribution to the group score. Its target is one label and its credit is a difference of two group scores. Our target is a stated PMF and our credit is the witness in closed form. The closed form lets us prove unbiasedness and state what mean-centering does. On a binary alphabet, the property-ratio reward of \citet{huang2026alignment} equals the sign witness computed from the full group, up to centering and scale, on every group whose ratio lies outside their tolerance band, and their multi-class variant extends this correspondence to larger alphabets (Appendix~\ref{app:propertyratio}). Pass@$k$ policy optimization \citep{walder2025pkpo} also turns an objective defined on a set of rollouts into per-rollout rewards. It derives unbiased estimators of pass@$k$ and of its gradient, so its objective is the joint success of a set of rollouts and not a match to a distribution. Spectrum Tuning \citep{sorensen2026spectrum} trains for distributional coverage by supervised fine-tuning on samples, and the pluralistic-alignment program \citep{sorensen2024roadmap} names distributional calibration as a goal. GRPO's policy gradient can be written as a U-statistic \citep{zhou2026demystifying}, the same form as the unbiased MMD estimator on which the witness advantage is built.

\section{Notation and Tools for the Proofs}
\label{app:tools}

Appendices~\ref{app:witness}, \ref{app:scalar}, and~\ref{app:sign} prove the results of Section~\ref{sec:method}. This appendix collects what the three proofs share. We fix the notation, show that the proofs may work with outcomes in place of text responses, and state two lemmas about the score function that carry most of the algebra.

\subsection{Notation}

All quantities refer to one fixed prompt, and the following symbols have the same meaning in all three proofs.
\begin{itemize}[leftmargin=*,itemsep=1pt,topsep=2pt]
\item $\mathcal{X}$ is the finite set of outcomes. It contains the invalid outcome $\bot$. The target $q$ is the distribution on $\mathcal{X}$ stated in the prompt, and $q(\bot)=0$.
\item The model generates response $y$ with probability $P_\theta(y)$, where $\theta$ are the model parameters and $\mathcal{Y}$ is the finite set of responses. The parser $\varphi$ maps a response to its outcome $\varphi(y) \in \mathcal{X}$.
\item $\pi_\theta(x) = \sum_{y:\varphi(y)=x} P_\theta(y)$ is the model's outcome distribution (Section~\ref{sec:setup}). It is differentiable in $\theta$, and $\pi_\theta(x) > 0$ for every outcome $x$ (Appendix~\ref{app:reduction} shows why both hold for a language model).
\item A group consists of $G \ge 2$ rollouts. Rollout $i$ has response $y_i$ and outcome $x_i = \varphi(y_i)$, and the outcomes $x_1,\dots,x_G$ are i.i.d.\ from $\pi_\theta$. An expectation $\E$ without a subscript is over the rollouts of a group, or of a batch in Appendix~\ref{app:scalar}.
\item $s(x) = \nabla_\theta \log \pi_\theta(x)$ is the score of outcome $x$, the direction in parameter space that most increases the log-probability of $x$.
\item $\mathbb{1}[\cdot]$ is the indicator of an event. The group frequency $\hat p(x) = \tfrac1G\sum_{j}\mathbb{1}[x_j = x]$ is the fraction of the $G$ rollouts whose outcome is $x$. The leave-one-out frequency $\hat p_{-i}(x) = \tfrac{1}{G-1}\sum_{j \ne i}\mathbb{1}[x_j = x]$ is the fraction among the other $G-1$ rollouts.
\item For a vector $v$ indexed by outcomes, $\|v\| = (\sum_x v(x)^2)^{1/2}$ is the Euclidean norm and $\|v\|_1 = \sum_x |v(x)|$. In particular, $\|\pi_\theta\|^2 = \sum_x \pi_\theta(x)^2$ is the collision probability, the probability that two independent draws from $\pi_\theta$ give the same outcome.
\item $\tv(p,q) = \tfrac12\|p - q\|_1$ is the total variation distance. With the exact-match kernel of Section~\ref{sec:witness}, $\mmd^2(p,q) = \|p - q\|^2$ (Equation~\ref{eq:mmdclosed}). We write $\mmd^2$ without arguments for $\mmd^2(\pi_\theta,q)$.
\item $A_i$ is the advantage of rollout $i$ before GRPO subtracts the group mean $\bar A = \tfrac1G\sum_j A_j$, and $\tilde A_i = A_i - \bar A$ is the centered advantage. Each proof defines its own $A_i$ and its own update. A subscript $\mathrm{c}$, as in $U_{\mathrm{c}}$, marks a quantity that belongs to the centered update.
\item An outcome $x$ is low-mass if $0 < q(x) < 1/(G-1)$ (Section~\ref{sec:simpler}).
\end{itemize}
Every expectation below is a finite sum, so gradients and expectations can be exchanged without further justification.

\subsection{From Responses to Outcomes}
\label{app:reduction}

A GRPO step moves the parameters along $\tfrac1G\sum_i \tilde A_i\, \nabla_\theta \log P_\theta(y_i)$, the score of each response $y_i$ weighted by its advantage (Section~\ref{sec:setup}). The proofs are much simpler if each rollout instead moves along the score $s(x_i)$ of its outcome, because then only the outcome distribution $\pi_\theta$ enters. The two scores are different vectors, so a single update differs between the two views. Lemma~\ref{lem:reduction} shows that the expected updates are equal whenever the weights depend on the responses only through their outcomes. Every advantage in this paper has this property, including the witness advantage and the sign witness before and after centering, and the group-scalar reward.

The lemma uses three facts about our setting. A training response has at most $24$ tokens (Appendix~\ref{app:training}), so the set $\mathcal{Y}$ of responses is finite. The softmax gives every response a probability $P_\theta(y) > 0$ that is differentiable in $\theta$. Every outcome in $\mathcal{X}$, including $\bot$, is the parse of some response, so $\pi_\theta(x) > 0$ and the score $s(x)$ is defined.

\begin{lemma}[Outcome reduction]
\label{lem:reduction}
Let $y_1,\dots,y_N$ be i.i.d.\ from $P_\theta$, let $x_i = \varphi(y_i)$, and let $w_i = a_i(x_1,\dots,x_N)$ be weights given by arbitrary functions $a_i:\mathcal{X}^N \to \mathbb{R}$. Then $x_1,\dots,x_N$ are i.i.d.\ from $\pi_\theta$, and
\begin{equation}
\E\Big[\frac1N\sum_{i=1}^N w_i\, \nabla_\theta \log P_\theta(y_i)\Big] = \E\Big[\frac1N\sum_{i=1}^N w_i\, s(x_i)\Big].
\label{eq:reduction}
\end{equation}
\end{lemma}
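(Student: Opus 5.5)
The plan is to condition on the outcomes and average the response score over all responses that share an outcome.

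The first claim needs no work. The responses $y_i$ are i.i.d.\ from $P_\theta$, and $\varphi$ is a deterministic map, so the $x_i=\varphi(y_i)$ are i.i.d. Each has law $\Pr(x_i=x)=\sum_{y:\varphi(y)=x}P_\theta(y)=\pi_\theta(x)$.

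For the identity, I handle each $i$ separately and use linearity. I condition on the vector of outcomes $(x_1,\dots,x_N)$. Given this vector, $w_i$ is fixed. The responses are conditionally independent, and $y_i$ has conditional law $P_\theta(y)/\pi_\theta(x_i)$ on the fiber $\{y:\varphi(y)=x_i\}$. This uses $\pi_\theta(x)>0$, noted in Appendix~\ref{app:reduction}. The tower property then gives
\[
\E\big[w_i\,\nabla_\theta\log P_\theta(y_i)\big]=\E\Big[w_i\,\E\big[\nabla_\theta\log P_\theta(y_i)\,\big|\,x_1,\dots,x_N\big]\Big].
\]

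The key step computes the inner conditional expectation. Since $y_i$ depends only on $x_i$ given the whole outcome vector,
\[
\E\big[\nabla_\theta\log P_\theta(y_i)\,\big|\,x_i=x\big]=\sum_{y:\varphi(y)=x}\frac{P_\theta(y)}{\pi_\theta(x)}\,\frac{\nabla_\theta P_\theta(y)}{P_\theta(y)}=\frac{\nabla_\theta\sum_{y:\varphi(y)=x}P_\theta(y)}{\pi_\theta(x)}=\frac{\nabla_\theta\pi_\theta(x)}{\pi_\theta(x)}=s(x).
\]
Exchanging the gradient with the sum is legitimate because $\mathcal{Y}$ is finite and each $P_\theta(y)>0$ is differentiable. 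Substituting this, summing over $i$ and dividing by $N$ gives Equation~\ref{eq:reduction}.

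The only delicate point is justifying that the conditional law of $y_i$ given the whole outcome vector equals its law given $x_i$ alone. I would verify this directly from the product form of the joint distribution of $(y_1,\dots,y_N)$. The sums over the other responses' fibers factor out and cancel, so conditioning on all outcomes leaves the fiber distribution of $y_i$ unchanged.
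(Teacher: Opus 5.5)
Your proof is correct. It uses the same core identity as the paper, $\sum_{y:\varphi(y)=x}\nabla_\theta P_\theta(y)=\nabla_\theta\pi_\theta(x)$, but it conditions on something different.

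The paper fixes $i$, conditions on the other responses $y_{-i}$, and sums over all values of $y_i$. The weight $a_i(\varphi(y),x_{-i})$ then depends on $y$ only through its outcome, so grouping by fibers turns $\sum_y a_i(\varphi(y),x_{-i})\nabla_\theta P_\theta(y)$ into $\sum_x a_i(x,x_{-i})\nabla_\theta\pi_\theta(x)$. Because $y_i$ is independent of $y_{-i}$, it still has law $P_\theta$ after conditioning, so no conditional law has to be derived.

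You instead condition on the full outcome vector, which fixes $w_i$, and you prove the pointwise identity $\E[\nabla_\theta\log P_\theta(y_i)\mid x_1,\dots,x_N]=s(x_i)$. This costs you the extra step you flagged: checking from the product form that the conditional law of $y_i$ is the normalized fiber law, which then requires dividing by $\pi_\theta(x)$.

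In return your route proves more than the lemma. It shows that the outcome-level update is exactly the conditional expectation of the response-level update given the outcomes. Equation~\ref{eq:reduction} then follows from the tower property. By the law of total variance, the outcome-level update also has covariance no larger, in the positive semidefinite order, than the response-level update. This sharpens the paper's closing remark that the two views have different variances, which its own proof leaves as an unquantified observation.
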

\begin{proof}
We prove the two claims in turn.

\textbf{Step 1: the outcomes are i.i.d.} The parser acts on each response separately, so the outcomes are independent, and $\Pr[\varphi(y_i) = x] = \sum_{y:\varphi(y)=x} P_\theta(y) = \pi_\theta(x)$.

\textbf{Step 2: Equation~\ref{eq:reduction}.} Fix $i$ and condition on the other responses $y_{-i}$, whose outcomes we write $x_{-i}$. We write $a_i(x, x_{-i})$ for $a_i$ with $x$ in position $i$. Since $P_\theta(y)\,\nabla_\theta\log P_\theta(y) = \nabla_\theta P_\theta(y)$,
\begin{align*}
\E\big[w_i\, \nabla_\theta \log P_\theta(y_i) \,\big|\, y_{-i}\big] &= \sum_{y \in \mathcal{Y}} a_i\big(\varphi(y), x_{-i}\big)\,\nabla_\theta P_\theta(y) \\
&= \sum_{x \in \mathcal{X}} a_i(x, x_{-i}) \sum_{y:\,\varphi(y)=x} \nabla_\theta P_\theta(y) = \sum_{x \in \mathcal{X}} a_i(x,x_{-i})\,\nabla_\theta \pi_\theta(x).
\end{align*}
The second equality groups the responses by outcome, and the third differentiates the finite sum that defines $\pi_\theta(x)$. Since $\nabla_\theta \pi_\theta(x) = \pi_\theta(x)\, s(x)$, the last expression equals $\E[a_i(x_i,x_{-i})\,s(x_i) \mid x_{-i}]$ with $x_i \sim \pi_\theta$. It depends on $y_{-i}$ only through $x_{-i}$, which are i.i.d.\ from $\pi_\theta$ by Step 1. Taking the expectation over $y_{-i}$ and averaging over $i$ gives Equation~\ref{eq:reduction}.
\end{proof}

Lemma~\ref{lem:reduction} says that when the weights depend only on outcomes, the expected update depends on the model only through its outcome distribution. The weights may be the advantages $A_i$ or the centered advantages $\tilde A_i$. With $N = G$, the lemma covers the expected updates of Propositions~\ref{prop:witness} and~\ref{prop:sign}, before and after centering, including the update with the population sign in Proposition~\ref{prop:sign}(i). With $N = KG$, it covers Proposition~\ref{prop:scalar}(ii) in Appendix~\ref{app:scalar}. The proofs below therefore work with the outcome-level updates. The remaining claims, such as the standard-deviation bound of Proposition~\ref{prop:scalar}(i) and the expected advantage of Proposition~\ref{prop:sign}(iii), concern the values of the advantages. These values are functions of the outcomes, so the claims hold in both views. The lemma is about expectations only. The response score also varies among responses with the same outcome, so the variance of an update differs between the two views.

\subsection{Two Lemmas on the Score}

The first lemma turns the expectation of a function times the score into a sum of gradients of outcome probabilities.

\begin{lemma}[Score identities]
\label{lem:score}
Let $x \sim \pi_\theta$, and let $f:\mathcal{X}\to\mathbb{R}$ be any function. Then
\begin{equation*}
\text{(a)}\quad \E[s(x)] = 0, \qquad\qquad \text{(b)}\quad \E[f(x)\,s(x)] = \sum_x f(x)\,\nabla_\theta \pi_\theta(x).
\end{equation*}
\end{lemma}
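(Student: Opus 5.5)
Both identities follow from the relation between the score and the gradient of the outcome probability, $\pi_\theta(x)\,s(x) = \nabla_\theta \pi_\theta(x)$. This relation holds because $\pi_\theta(x) > 0$ for every outcome and $\pi_\theta$ is differentiable in $\theta$ (Appendix~\ref{app:reduction}), so $s(x) = \nabla_\theta \log \pi_\theta(x) = \nabla_\theta \pi_\theta(x)/\pi_\theta(x)$ is well defined. I would prove (b) first and then obtain (a) as the special case of a constant $f$.

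For (b), I would write the expectation as a finite sum over $\mathcal{X}$, since $x \sim \pi_\theta$ and $\mathcal{X}$ is finite. This gives $\E[f(x)\,s(x)] = \sum_x \pi_\theta(x)\, f(x)\, s(x)$. Substituting $\pi_\theta(x)\,s(x) = \nabla_\theta \pi_\theta(x)$ term by term yields $\sum_x f(x)\,\nabla_\theta \pi_\theta(x)$. No exchange of limits is needed, because every sum is finite.

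For (a), I would take $f \equiv 1$ in (b). Then $\E[s(x)] = \sum_x \nabla_\theta \pi_\theta(x) = \nabla_\theta \sum_x \pi_\theta(x)$, again by linearity of the gradient over a finite sum. Since $\pi_\theta$ is a distribution on $\mathcal{X}$, including $\bot$, we have $\sum_x \pi_\theta(x) = 1$ for every $\theta$, so the gradient is $0$.

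No step is a real obstacle. The only point that needs care is justifying that $s(x)$ exists and that the identity $\pi_\theta s = \nabla_\theta \pi_\theta$ holds at every outcome. Both require $\pi_\theta(x) > 0$ and differentiability, which Appendix~\ref{app:reduction} supplies through the finiteness of $\mathcal{Y}$, the positivity of the softmax, and the fact that every outcome, including $\bot$, is the parse of some response.
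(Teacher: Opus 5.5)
Your proof is correct and follows the paper's argument essentially step for step. The paper also uses $\pi_\theta(x)\,s(x) = \nabla_\theta \pi_\theta(x)$, which holds because $\pi_\theta(x) > 0$, to turn the finite sum into (b), and then takes $f \equiv 1$ with $\nabla_\theta \sum_x \pi_\theta(x) = \nabla_\theta 1 = 0$ to get (a).
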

\begin{proof}
Since $\pi_\theta(x) > 0$, we have $\pi_\theta(x)\, s(x) = \pi_\theta(x)\,\nabla_\theta \log\pi_\theta(x) = \nabla_\theta \pi_\theta(x)$. Hence
\begin{equation*}
\E[f(x)\,s(x)] = \sum_x \pi_\theta(x)\, f(x)\, s(x) = \sum_x f(x)\, \nabla_\theta \pi_\theta(x),
\end{equation*}
which is (b). For (a), take $f \equiv 1$ and use $\sum_x \nabla_\theta \pi_\theta(x) = \nabla_\theta \sum_x \pi_\theta(x) = \nabla_\theta 1 = 0$.
\end{proof}

The second lemma handles the products that appear when a leave-one-out frequency multiplies a score. Such a product has an indicator that two rollouts match, times the score of one rollout. Only a match that involves the rollout that owns the score contributes.

\begin{lemma}[Pair moments]
\label{lem:pair}
Let $x_1,\dots,x_G$ be i.i.d.\ from $\pi_\theta$, and let $i,j,\ell$ be distinct indices. Then
\begin{align*}
\text{(a)}\quad & \E\big[\mathbb{1}[x_j = x_i]\,s(x_i)\big] = \tfrac12 \nabla_\theta\|\pi_\theta\|^2, \\
\text{(b)}\quad & \E\big[\mathbb{1}[x_\ell = x_j]\,s(x_i)\big] = 0, \\
\text{(c)}\quad & \E\big[q(x_j)\,s(x_i)\big] = 0.
\end{align*}
\end{lemma}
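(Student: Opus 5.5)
The plan is to handle each of the three identities by conditioning on the rollout that owns the score, $x_i$, and then invoking Lemma~\ref{lem:score}. Since the rollouts are i.i.d.\ from $\pi_\theta$ (Lemma~\ref{lem:reduction}, Step 1), every other rollout is independent of $x_i$. An expectation of the form $\E[g(x_i,x_{-i})\,s(x_i)]$ can therefore be computed as $\E[h(x_i)\,s(x_i)]$ with $h(x)=\E[g(x,x_{-i})]$. Lemma~\ref{lem:score}(b) then turns this into $\sum_x h(x)\,\nabla_\theta\pi_\theta(x)$.

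\textbf{Part (a).} Conditioning on $x_i=x$ gives $\E[\mathbb{1}[x_j=x]]=\pi_\theta(x)$, so $h(x)=\pi_\theta(x)$. Lemma~\ref{lem:score}(b) then yields
\[
\sum_x \pi_\theta(x)\,\nabla_\theta\pi_\theta(x)=\tfrac12\nabla_\theta\sum_x\pi_\theta(x)^2=\tfrac12\nabla_\theta\|\pi_\theta\|^2 .
\]

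\textbf{Part (b).} The indicator $\mathbb{1}[x_\ell=x_j]$ is a function of $(x_j,x_\ell)$ only, and this pair is independent of $x_i$. The expectation therefore factors as $\E[\mathbb{1}[x_\ell=x_j]]\cdot\E[s(x_i)]$. This is $\|\pi_\theta\|^2\cdot 0=0$ by Lemma~\ref{lem:score}(a).

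\textbf{Part (c).} The same factoring gives $\E[q(x_j)]\cdot\E[s(x_i)]=0$, again by Lemma~\ref{lem:score}(a).

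There is no real obstacle; the only care needed is justifying the factorization by independence, which requires $i\notin\{j,\ell\}$. This is exactly where distinctness of the indices is used, and it explains why only matches involving the score's owner survive. All expectations are finite sums, so exchanging $\nabla_\theta$ with the sums is immediate.
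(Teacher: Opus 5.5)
Your proposal is correct and follows essentially the same route as the paper: condition on $x_i=x$ in (a) to get $\sum_x \pi_\theta(x)\nabla_\theta\pi_\theta(x)=\tfrac12\nabla_\theta\|\pi_\theta\|^2$. In (b) and (c), factor by independence (using $i\notin\{j,\ell\}$) and apply $\E[s(x_i)]=0$ from Lemma~\ref{lem:score}(a).
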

\begin{proof}
For (a), condition on $x_i = x$. The draw $x_j$ equals $x$ with probability $\pi_\theta(x)$, so
\begin{equation*}
\E\big[\mathbb{1}[x_j = x_i]\,s(x_i)\big] = \sum_x \pi_\theta(x)\cdot \pi_\theta(x)\cdot s(x) = \sum_x \pi_\theta(x)\, \nabla_\theta\pi_\theta(x) = \tfrac12 \nabla_\theta \sum_x \pi_\theta(x)^2 .
\end{equation*}
The second equality uses $\pi_\theta(x)\,s(x) = \nabla_\theta\pi_\theta(x)$, the identity behind Lemma~\ref{lem:score}, and the third is the chain rule. For (b) and (c), the first factor does not involve $x_i$, so the expectation factorizes into the expectation of the first factor times $\E[s(x_i)]$, which is zero by Lemma~\ref{lem:score}(a).
\end{proof}

\section{Proofs for the Witness Advantage}
\label{app:witness}

Proposition~\ref{prop:witness} states that the witness advantage makes GRPO's expected update the negative gradient of $\mmd^2$, and computes the bias that subtracting the group mean adds. We restate and prove it. We then show that counting a rollout in its own frequency strengthens the concentration term (Appendix~\ref{app:fullgroup}), find where the centered update moves the model (Appendix~\ref{app:witnessfixed}), work through a small example (Appendix~\ref{app:toy}), and list the numerical checks (Appendix~\ref{app:witnesschecks}).

Rollout $i$ receives the witness advantage $A_i = 2\big(q(x_i) - \hat p_{-i}(x_i)\big)$ of Equation~\ref{eq:witness}, and GRPO trains on the centered advantage $\tilde A_i = A_i - \bar A$. Section~\ref{sec:optimizes} defines the updates $U$ and $U_{\mathrm{c}}$ with the response scores $\nabla_\theta \log P_\theta(y_i)$. In this appendix we reuse the two symbols for the outcome-level updates
\begin{equation*}
U = \frac1G\sum_{i=1}^G A_i\, s(x_i), \qquad U_{\mathrm{c}} = \frac1G\sum_{i=1}^G \tilde A_i\, s(x_i).
\end{equation*}
By Lemma~\ref{lem:reduction}, the two versions of each update have the same expectation, so it suffices to prove the proposition for the outcome-level updates. As fixed in Appendix~\ref{app:tools}, the outcomes are i.i.d.\ from $\pi_\theta$ and $\pi_\theta(x) > 0$ for every $x$.

\begin{restatedbox}
\propwitness*
\end{restatedbox}

\paragraph{Proof idea.}
Given the outcome $x_i$ of rollout $i$, the other $G-1$ rollouts are still independent draws from $\pi_\theta$, so the leave-one-out frequency $\hat p_{-i}(x_i)$ has conditional mean $\pi_\theta(x_i)$. The advantage therefore has conditional mean $2\big(q(x_i) - \pi_\theta(x_i)\big)$, the negative of the witness function up to scale, and Lemma~\ref{lem:score} turns this into $-\nabla_\theta\mmd^2$ (Steps 1 and 2). The group mean $\bar A$ does not have this property. It contains $A_j$ for every $j \ne i$, and $A_j$ depends on $x_i$ because rollout $i$ is one of the rollouts that $\hat p_{-j}$ counts. Step 3 computes this dependence exactly.

\paragraph{Step 1: the MMD in closed form.}
With the exact-match kernel $k(x,x') = \mathbb{1}[x = x']$, the kernel expansion of the squared MMD, with independent draws $x,x' \sim \pi_\theta$ and $z,z' \sim q$, is
\begin{equation*}
\mmd^2(\pi_\theta,q) = \E[k(x,x')] + \E[k(z,z')] - 2\,\E[k(x,z)].
\end{equation*}
Each term is a collision probability, the probability that two independent draws give the same outcome:
\begin{equation*}
\E[k(x,x')] = \sum_x \pi_\theta(x)^2, \qquad \E[k(z,z')] = \sum_x q(x)^2, \qquad \E[k(x,z)] = \sum_x \pi_\theta(x)\,q(x).
\end{equation*}
Substituting the three terms gives
\begin{equation}
\mmd^2(\pi_\theta,q) = \sum_x \pi_\theta(x)^2 + \sum_x q(x)^2 - 2\sum_x \pi_\theta(x)\,q(x) = \sum_x \big(\pi_\theta(x) - q(x)\big)^2 .
\label{eq:mmdclosed}
\end{equation}
A sum of squares vanishes only when every term vanishes, so $\mmd^2(\pi_\theta,q) = 0$ only at $\pi_\theta = q$, which is the second claim of the proposition. Differentiating Equation~\ref{eq:mmdclosed} gives the gradient that the update should match,
\begin{equation}
-\nabla_\theta \mmd^2(\pi_\theta,q) = 2\sum_x \big(q(x) - \pi_\theta(x)\big)\,\nabla_\theta \pi_\theta(x).
\label{eq:mmdgrad}
\end{equation}

\paragraph{Step 2: the update before centering is unbiased.}
The rollouts are exchangeable, so $\E[U] = \E[A_1\, s(x_1)]$. We split $A_1 = 2q(x_1) - 2\hat p_{-1}(x_1)$ and treat the two terms separately. The target term is Lemma~\ref{lem:score}(b) with $f = 2q$,
\begin{equation*}
\E\big[2q(x_1)\,s(x_1)\big] = 2\sum_x q(x)\,\nabla_\theta\pi_\theta(x).
\end{equation*}
The leave-one-out term is an average of $G-1$ indicator terms, each given by Lemma~\ref{lem:pair}(a),
\begin{align*}
\E\big[2\hat p_{-1}(x_1)\,s(x_1)\big] &= \frac{2}{G-1}\sum_{j=2}^{G} \E\big[\mathbb{1}[x_j = x_1]\,s(x_1)\big] \\
&= \frac{2}{G-1}\cdot (G-1)\cdot \tfrac12\nabla_\theta\|\pi_\theta\|^2 = 2\sum_x \pi_\theta(x)\,\nabla_\theta\pi_\theta(x).
\end{align*}
Subtracting the second display from the first and comparing with Equation~\ref{eq:mmdgrad},
\begin{equation*}
\E[U] = 2\sum_x \big(q(x) - \pi_\theta(x)\big)\,\nabla_\theta\pi_\theta(x) = -\nabla_\theta \mmd^2(\pi_\theta,q).
\end{equation*}
This is the first claim of the proposition. It holds at every $G \ge 2$.

\paragraph{Step 3: the update after centering.}
Centering replaces every advantage $A_i$ by $\tilde A_i = A_i - \bar A$. By linearity,
\begin{equation}
U_{\mathrm{c}} = U - \bar A\,\bar g, \qquad \text{where } \bar g = \frac1G\sum_{i=1}^G s(x_i).
\label{eq:ucdecomp}
\end{equation}
It remains to compute $\E[\bar A\,\bar g]$, which is an average of $G^2$ products,
\begin{equation}
\E[\bar A\,\bar g] = \frac{1}{G^2}\sum_{i=1}^{G}\sum_{j=1}^{G}\E\big[A_j\,s(x_i)\big].
\label{eq:pairsum}
\end{equation}
The $G^2$ terms fall into two kinds.
\begin{itemize}[leftmargin=*,itemsep=1pt,topsep=2pt]
\item Diagonal terms ($i = j$). There are $G$ of them, and each equals $\E[A_i\,s(x_i)] = -\nabla_\theta\mmd^2$ by Step 2.
\item Off-diagonal terms ($i \ne j$). There are $G(G-1)$ of them. Split $A_j = 2q(x_j) - 2\hat p_{-j}(x_j)$ again. The target part vanishes by Lemma~\ref{lem:pair}(c). The leave-one-out part is
\begin{equation*}
\hat p_{-j}(x_j) = \frac{1}{G-1}\sum_{\ell \ne j}\mathbb{1}[x_\ell = x_j].
\end{equation*}
Every index $\ell \notin \{i,j\}$ gives a match that does not involve rollout $i$, so it contributes zero by Lemma~\ref{lem:pair}(b). The single index $\ell = i$ contributes $\tfrac12\nabla_\theta\|\pi_\theta\|^2$ by Lemma~\ref{lem:pair}(a).
\end{itemize}
Hence, for $i \ne j$,
\begin{equation}
\E\big[A_j\,s(x_i)\big] = -\frac{2}{G-1}\cdot\frac12\nabla_\theta\|\pi_\theta\|^2 = -\frac{1}{G-1}\nabla_\theta\|\pi_\theta\|^2 .
\label{eq:offdiag}
\end{equation}
Substituting both kinds of term into Equation~\ref{eq:pairsum},
\begin{align*}
\E[\bar A\,\bar g] &= \frac{1}{G^2}\Big[ G\cdot\big(-\nabla_\theta\mmd^2\big) + G(G-1)\cdot\Big(-\frac{1}{G-1}\nabla_\theta\|\pi_\theta\|^2\Big)\Big] \\
&= -\frac1G\nabla_\theta\mmd^2 - \frac1G\nabla_\theta\|\pi_\theta\|^2 .
\end{align*}
Finally, Equation~\ref{eq:ucdecomp} and Step 2 give
\begin{align*}
\E[U_{\mathrm{c}}] = \E[U] - \E[\bar A\,\bar g] &= -\nabla_\theta\mmd^2 + \frac1G\nabla_\theta\mmd^2 + \frac1G\nabla_\theta\|\pi_\theta\|^2 \\
&= -\frac{G-1}{G}\nabla_\theta\mmd^2 + \frac1G\nabla_\theta\|\pi_\theta\|^2,
\end{align*}
which is Equation~\ref{eq:centered}. This completes the proof. \qed

\paragraph{Why the group mean biases the update.}
The standard argument that subtracting a baseline leaves the expected update unchanged requires the baseline to be independent of the rollout's own outcome. A constant baseline satisfies this condition. The group mean $\bar A$ does not, and Equation~\ref{eq:offdiag} gives the exact size of the dependence.

\subsection{Counting the Rollout Itself}
\label{app:fullgroup}

Section~\ref{sec:witness} builds the witness advantage from the leave-one-out frequency and not from the group frequency $\hat p(x_i)$, which counts rollout $i$ itself. We now show that the self-count adds a term that concentrates the model, and that after centering this term has about twice the weight of the term that centering adds to the leave-one-out update. The full-group advantage and its update are
\begin{equation*}
\hat A_i = 2\big(q(x_i) - \hat p(x_i)\big), \qquad \hat U = \frac1G\sum_{i=1}^G \hat A_i\, s(x_i).
\end{equation*}
As in Step 2, $\E[\hat U] = \E[\hat A_1\,s(x_1)]$, and the target term is unchanged. The frequency term differs because rollout $1$ always matches itself:
\begin{equation*}
\hat p(x_1) = \frac1G\Big(1 + \sum_{j \ge 2}\mathbb{1}[x_j = x_1]\Big).
\end{equation*}
The constant $1$ contributes zero by Lemma~\ref{lem:score}(a), and each of the $G-1$ indicators contributes $\tfrac12\nabla_\theta\|\pi_\theta\|^2$ by Lemma~\ref{lem:pair}(a):
\begin{equation*}
\E\big[2\hat p(x_1)\,s(x_1)\big] = \frac2G\,\E[s(x_1)] + \frac{2(G-1)}{G}\cdot\tfrac12\nabla_\theta\|\pi_\theta\|^2 = \frac{G-1}{G}\nabla_\theta\|\pi_\theta\|^2 .
\end{equation*}
Using $\nabla_\theta\|\pi_\theta\|^2 = 2\sum_x \pi_\theta(x)\nabla_\theta\pi_\theta(x)$ and Equation~\ref{eq:mmdgrad},
\begin{equation*}
\E[\hat U] = 2\sum_x q(x)\nabla_\theta\pi_\theta(x) - \frac{G-1}{G}\nabla_\theta\|\pi_\theta\|^2 = -\nabla_\theta\mmd^2(\pi_\theta,q) + \frac1G\nabla_\theta\|\pi_\theta\|^2 .
\end{equation*}
Before centering, the self-count adds a term of weight $1/G$ that increases the collision probability and so pushes the model to concentrate its probability on fewer outcomes.

Centering adds a term of the same form to the leave-one-out update (Equation~\ref{eq:centered}), so the comparison that matters is between the two centered updates. Let $\hat U_{\mathrm{c}} = \frac1G\sum_i (\hat A_i - \bar{\hat A})\,s(x_i)$, where $\bar{\hat A}$ is the group mean of the $\hat A_i$. We repeat Step 3 with $\hat A_j$ in place of $A_j$. The diagonal terms equal $\E[\hat U]$. For an off-diagonal term ($i \ne j$), the frequency $\hat p(x_j)$ contains the constant $\tfrac1G$ from the self-match of rollout $j$, which contributes zero by Lemma~\ref{lem:score}(a), and a single match with $x_i$, which contributes $\tfrac1G\cdot\tfrac12\nabla_\theta\|\pi_\theta\|^2$ by Lemma~\ref{lem:pair}(a). Hence $\E[\hat A_j\,s(x_i)] = -\tfrac1G\nabla_\theta\|\pi_\theta\|^2$, and
\begin{equation*}
\E[\hat U_{\mathrm{c}}] = -\frac{G-1}{G}\nabla_\theta\mmd^2(\pi_\theta,q) + \frac{2(G-1)}{G^2}\nabla_\theta\|\pi_\theta\|^2 .
\end{equation*}
After centering, the self-count raises the weight of the concentration term from $1/G$ to $2(G-1)/G^2$, nearly twice as much. For a distribution $\pi$, completing the square gives $\frac{G-1}{G}(\pi(x)-q(x))^2 - \frac{2(G-1)}{G^2}\pi(x)^2 = \frac{(G-1)(G-2)}{G^2}\big(\pi(x) - \tfrac{G}{G-2}q(x)\big)^2 + \text{const}$. The argument of Appendix~\ref{app:witnessfixed}, with $G/(G-2)$ in place of $\gamma$, then bounds the shift of the minimizer by $2/(G-2)$ in TV instead of $1/(G-2)$.

Removing the self-count is the U-statistic correction of the MMD estimator \citep{gretton2012kernel,binkowski2018demystifying}. It is also how the leave-one-out baseline for REINFORCE \citep{williams1992simple} is built, from the other rollouts only \citep{kool2019buy,ahmadian2024back}.

\subsection{Where the Centered Update Moves the Model}
\label{app:witnessfixed}

Centering adds a term of weight $1/G$ to the expected update (Proposition~\ref{prop:witness}). We now ask how far this term moves the distribution that training approaches. For a distribution $\pi$ on $\mathcal{X}$, define the loss
\begin{equation*}
F_{\mathrm{c}}(\pi) = \frac{G-1}{G}\,\mmd^2(\pi,q) - \frac1G\,\|\pi\|^2 .
\end{equation*}
By Equation~\ref{eq:centered}, the expected centered update is $\E[U_{\mathrm{c}}] = -\nabla_\theta F_{\mathrm{c}}(\pi_\theta)$, so in expectation training descends $F_{\mathrm{c}}$. This loss is uniformly close to $\mmd^2$. Since $\mmd^2(\pi,q) = \|\pi - q\|^2 \le \|\pi - q\|_1 \le 2$ and $\|\pi\|^2 \le 1$,
\begin{equation*}
\big|F_{\mathrm{c}}(\pi) - \mmd^2(\pi,q)\big| = \frac1G\big(\mmd^2(\pi,q) + \|\pi\|^2\big) \le \frac3G .
\end{equation*}
A small difference between two losses does not by itself bound the distance between their minimizers, so we compute the minimizer of $F_{\mathrm{c}}$ directly.

\paragraph{The minimizer.}
Suppose that the model can assign any probability to each outcome. At $G = 2$, the centered advantages are $\tilde A_1 = q(x_1) - q(x_2) = -\tilde A_2$, so the frequencies cancel. The loss $F_{\mathrm{c}}(\pi) = \tfrac12\|q\|^2 - \sum_x \pi(x)\,q(x)$ is then linear in $\pi$, and its minimizer is a point mass on a most likely outcome of $q$. For $G \ge 3$, we find the distribution $\pi^\star$ that minimizes $F_{\mathrm{c}}$ over all distributions on $\mathcal{X}$. Let $\gamma = (G-1)/(G-2)$, which is slightly larger than $1$. Completing the square for each outcome $x$ gives
\begin{equation*}
\frac{G-1}{G}\big(\pi(x) - q(x)\big)^2 - \frac1G\,\pi(x)^2 = \frac{G-2}{G}\big(\pi(x) - \gamma\,q(x)\big)^2 + (\text{a term that does not depend on } \pi).
\end{equation*}
Summed over $x$, $F_{\mathrm{c}}$ is a positive multiple of the squared Euclidean distance from $\pi$ to the vector $\gamma q$, plus a constant. Its minimizer $\pi^\star$ is therefore unique, and it is the Euclidean projection of $\gamma q$ onto the set of distributions on $\mathcal{X}$. We write this projection as minimizing $\|\pi - \gamma q\|^2$ subject to $\sum_x \pi(x) = 1$ and $\pi(x) \ge 0$. The Karush--Kuhn--Tucker conditions, with multiplier $2\tau$ on the equality constraint, give
\begin{equation}
\pi^\star(x) = \max\big(\gamma\,q(x) - \tau,\ 0\big) \quad\text{for all } x .
\label{eq:projection}
\end{equation}
Here $\tau$ is the number for which $\sum_x \pi^\star(x) = 1$. This number is unique. The sum $\sum_x \max(\gamma q(x) - \tau, 0)$ is continuous and nonincreasing in $\tau$, and strictly decreasing wherever it is positive. It equals $\gamma > 1$ at $\tau = 0$ and $0$ at $\tau = \gamma \max_x q(x)$, so it takes the value $1$ at exactly one $\tau$.

Equation~\ref{eq:projection} implies three properties of $\pi^\star$. It gives zero probability to every outcome that $q$ excludes, it has a closed form when no target probability is too small, and it is within $1/(G-2)$ of $q$ in TV. We prove them in turn.

First, $\tau > 0$. If $\tau \le 0$, Equation~\ref{eq:projection} would give $\sum_x \pi^\star(x) \ge \gamma\sum_x q(x) = \gamma > 1$. Every outcome with $q(x) = 0$, including $\bot$, therefore has $\pi^\star(x) = 0$.

Second, $\pi^\star$ has a closed form when no target probability is too small. Let $S = \{x : q(x) > 0\}$ be the support of $q$, let $u$ be the uniform distribution on $S$, and suppose that $(G-1)\,q(x) \ge 1/|S|$ for every $x \in S$. Then
\begin{equation*}
\tau = \frac{1}{(G-2)\,|S|} .
\end{equation*}
To check this value, note that it satisfies $\gamma\,q(x) - \tau \ge 0$ on $S$ and $\sum_{x \in S}(\gamma\,q(x) - \tau) = \gamma - 1/(G-2) = 1$. It is therefore the unique $\tau$ of Equation~\ref{eq:projection}. Substituting it into Equation~\ref{eq:projection} gives, on $S$,
\begin{equation*}
\pi^\star = \frac{(G-1)\,q - u}{G-2}, \qquad \pi^\star - q = \frac{q - u}{G-2}.
\end{equation*}
Each outcome moves away from uniform by $1/(G-2)$ times the difference between its target probability and $1/|S|$, which is less than $2\%$ of that difference at $G = 64$. If some $x \in S$ has $(G-1)\,q(x) < 1/|S|$, this formula would give $\pi^\star(x) < 0$. Equation~\ref{eq:projection} instead sets to zero every outcome with $\gamma\,q(x) \le \tau$, which are the outcomes with the smallest target probabilities.

Third, $\pi^\star$ is always close to $q$. Let $S^\star = \{x : \pi^\star(x) > 0\}$ be its support, write $q(B) = \sum_{x \in B} q(x)$ for the target probability of a set $B$ of outcomes, and write $z^+ = \max(z, 0)$. The outcomes that $\pi^\star$ sets to zero carry little target probability. Summing Equation~\ref{eq:projection} over $S^\star$ gives $\gamma\,q(S^\star) - |S^\star|\,\tau = 1$. Since $\tau > 0$ by the first property, $q(S^\star) \ge 1/\gamma$, so
\begin{equation*}
1 - q(S^\star) \le 1 - \frac1\gamma = \frac{1}{G-1}.
\end{equation*}
The outcomes that $\pi^\star$ keeps gain little probability. Only outcomes in $S^\star$ can have $\pi^\star(x) > q(x)$, and there $\pi^\star(x) - q(x) = q(x)/(G-2) - \tau < q(x)/(G-2)$. Hence
\begin{equation*}
\tv(\pi^\star, q) = \sum_x \big(\pi^\star(x) - q(x)\big)^+ \le \frac{q(S^\star)}{G-2} \le \frac{1}{G-2},
\end{equation*}
which is $0.016$ at $G = 64$.

\paragraph{What the minimizer means for training.}
The distribution $\pi^\star$ is the unique minimizer of the loss that the centered update descends in expectation, and it lies within $1/(G-2)$ of the target in TV. Like $q$ itself, it gives zero probability to $\bot$, so a softmax model approaches it without reaching it. We do not prove that the idealized flow of the centered update converges to $\pi^\star$. In a separate small simulation, we trained a softmax model on the expected centered update for $45$ targets (five group sizes, three alphabet sizes, and random, Zipf, and small-mass targets). It came within $3 \times 10^{-3}$ of $\pi^\star$ in every coordinate, and the remaining distance kept shrinking with more steps.

\subsection{A Worked Example}
\label{app:toy}

Table~\ref{tab:toygroup} evaluates the witness advantage on a group small enough to check by hand. The group has $G = 6$ rollouts with outcomes $(a,a,a,b,c,\bot)$, and the target is $q = (1/2,\, 3/10,\, 1/5,\, 0)$ on $(a,b,c,\bot)$. For a rollout on $a$, the other five rollouts contain two more copies of $a$, so $\hat p_{-i}(a) = 2/5$ and $A_i = 2(1/2 - 2/5) = 1/5$. For the rollout on $b$, none of the other five is $b$, so $A_i = 2(3/10 - 0) = 3/5$. Likewise $A_i = 2/5$ for $c$. For the lone $\bot$, both $q(\bot)$ and $\hat p_{-i}(\bot)$ are zero, so $A_i = 0$.

The full-group variant uses $\hat p(a) = 3/6$ and $\hat p(b) = \hat p(c) = \hat p(\bot) = 1/6$. It gives $0$ to $a$, $4/15$ to $b$, $1/15$ to $c$, and $-1/3$ to $\bot$. The self-count changes every value. For example, the three copies of $a$ match the target probability exactly, so the full-group variant gives them zero advantage, although among the other five rollouts $a$ is under-produced.

\begin{table}[h]
\centering
\caption{The witness advantage and its full-group variant on a group of six rollouts with outcomes $(a,a,a,b,c,\bot)$, in exact fractions.}
\label{tab:toygroup}
\small
\setlength{\tabcolsep}{9pt}
\renewcommand{\arraystretch}{1.12}
\begin{tabular}{@{}lcccc@{}}
\toprule
 & $a$ \tsub{(3 copies)} & $b$ & $c$ & $\bot$ \\
\hmidrule
target probability $q(x)$ & $1/2$ & $3/10$ & $1/5$ & $0$ \\
witness advantage $A_i$ (ours) & $1/5$ & $3/5$ & $2/5$ & $0$ \\
full-group variant $\hat A_i$ & $0$ & $4/15$ & $1/15$ & $-1/3$ \\
\bottomrule
\end{tabular}
\end{table}

\paragraph{Invalid outcomes.}
The example shows that a lone invalid rollout receives advantage zero. In general, a group with $m \ge 2$ invalid rollouts gives each of them $A_i = -2(m-1)/(G-1)$. The model is still pushed away from invalid outputs in expectation, because the conditional mean of the advantage at $\bot$ is $2\big(q(\bot) - \pi_\theta(\bot)\big) = -2\pi_\theta(\bot) < 0$.

\subsection{Numerical Checks}
\label{app:witnesschecks}

We check Proposition~\ref{prop:witness} numerically. We recompute Table~\ref{tab:toygroup} in exact fractions and check the expected update against the automatic gradient of $-\mmd^2$ (largest deviation $1.5\times10^{-8}$). We also enumerate every group for $|\mathcal{X}| = 3$ and small $G$, which confirms Equation~\ref{eq:centered} to $3.4\times10^{-15}$, including an alphabet with a zero-mass outcome. The same enumeration confirms the centered full-group update of Appendix~\ref{app:fullgroup}. A Monte Carlo run with $40{,}000$ groups at $G = 8$ shows the bias of the full-group update. Its largest coordinate-wise deviation from $-\nabla_\theta\mmd^2$ is $0.016$, against $0.001$ for the leave-one-out update, a deviation of the size of the Monte Carlo error.

\section{Proofs for the Group-Scalar Reward}
\label{app:scalar}

Section~\ref{sec:simpler} makes two claims about the group-scalar reward, which scores a group by the negative TV between its empirical distribution and the target and gives this score to every rollout. First, with one group per prompt it gives GRPO no training signal. Second, the repaired version, which scores several subgroups and subtracts their mean score, optimizes a biased objective. This appendix states both claims formally and proves them. It then simulates when the bias starts to matter.

\subsection{Setting and Statements}

The notation of Appendix~\ref{app:tools} remains in force. In addition:
\begin{itemize}[leftmargin=*,itemsep=1pt,topsep=2pt]
\item A batch for one prompt consists of $K \ge 2$ subgroups, each of $G$ rollouts drawn i.i.d.\ from $\pi_\theta$. We trained with $K = 4$ and $G = 64$.
\item Subgroup $\ell$ has empirical distribution $\hat p^{(\ell)}$ and TV $T_\ell = \tv(\hat p^{(\ell)}, q)$. The mean TV of the batch is $\bar T = \tfrac1K\sum_{\ell=1}^K T_\ell$.
\item Every rollout in subgroup $\ell$ receives the reward $-T_\ell$. Subtracting the mean reward over the batch gives rollout $i$ the centered advantage $\tilde A_i = \bar T - T_{\ell(i)}$, where $\ell(i)$ is its subgroup. The outcome-level update is $U = \tfrac{1}{KG}\sum_i \tilde A_i\, s(x_i)$. This reward is used only with centering, so we write its centered update as $U$ without the subscript $\mathrm{c}$.
\item $\hat p_G$ is the empirical distribution of $G$ i.i.d.\ draws. It is the group frequency $\hat p$ of Appendix~\ref{app:tools}, with the group size made explicit. For a distribution $\pi$ on $\mathcal{X}$, the surrogate loss $F_G(\pi) = \E_{\pi}[\tv(\hat p_G, q)]$ is the expected TV of $G$ draws from $\pi$. Its bias $B_G(\pi) = F_G(\pi) - \tv(\pi,q)$ is the amount by which the expected TV of $G$ draws exceeds the TV of $\pi$ itself.
\end{itemize}

The first result explains why the group-scalar reward needs subgroups.

\begin{lemma}[Equal rewards give zero advantages]
\label{lem:degeneracy}
Let every rollout in a GRPO group receive the same reward. Then every advantage in that group is zero after the group mean is subtracted.
\end{lemma}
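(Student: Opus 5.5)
The argument is a direct computation from the definition of the GRPO advantage in Equation~\ref{eq:grpoadv}. Suppose every rollout in a group of $G$ rollouts receives the same reward, $r_1 = \dots = r_G = r$. Here $r$ may depend on the whole group, for example $r = -\tv(\hat p, q)$. We first compute the group mean: $\mathrm{mean}(r_1,\dots,r_G) = \tfrac1G\sum_{j=1}^G r_j = \tfrac1G\cdot G r = r$. Substituting into Equation~\ref{eq:grpoadv} gives $\tilde A_i = r_i - r = r - r = 0$ for every $i$. Nothing about how $r$ was produced enters: it may be random, depend on all outcomes of the group, or be any real number.

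We then note the consequence for the update. The GRPO update direction $\tfrac1G\sum_i \tilde A_i\,\nabla_\theta\log P_\theta(y_i)$ is identically the zero vector for every realization of the group, not only in expectation. The same holds for its outcome-level version $\tfrac1G\sum_i \tilde A_i\, s(x_i)$ of Lemma~\ref{lem:reduction}.

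It is also worth remarking, for use with the repaired reward below, what happens under the original GRPO normalization that divides by the standard deviation of the rewards. That standard deviation is zero when all rewards are equal. Implementations guard the division with a small constant $\epsilon$, so the normalized advantage is $0/\epsilon = 0$ and the conclusion is unchanged. In our setting (Appendix~\ref{app:training}) this division is omitted anyway.

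There is no real obstacle in this lemma. The only care needed is to state clearly that the conclusion is pathwise, for every group, rather than a statement about expectations. This is what forces the subgroup construction: once rewards differ across subgroups, the mean taken over the batch no longer equals each rollout's own reward.
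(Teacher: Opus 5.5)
Your proposal is correct and is the same argument as the paper's proof. The mean of $G$ equal rewards is the common reward, so every centered advantage is $r - r = 0$ and the update vanishes on every batch; dividing by the zero standard deviation, guarded by a small constant, still gives zero.
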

\begin{proof}
If every rollout receives the reward $r_0$, then every centered advantage is $r_0 - r_0 = 0$. The update is therefore exactly zero on every batch. The original GRPO also divides by the standard deviation of the rewards. This standard deviation is zero, and implementations add a small constant to the denominator, so the quotient is still zero.
\end{proof}
In a ten-step pilot run of this reward, the logged gradient norm was zero at every step.

The second result says what the repaired reward optimizes and why this objective is biased.

\begin{proposition}[The group-scalar reward optimizes a biased objective]
\label{prop:scalar}
Let a batch contain $K \ge 2$ subgroups of $G$ rollouts, and give every rollout in subgroup $\ell$ the centered advantage $\bar T - T_\ell$. Then the following hold (proof in Appendix~\ref{app:scalarproofs}).
\begin{enumerate}[label=(\roman*),leftmargin=*,itemsep=0pt,topsep=1pt]
\item The advantages take at most $K$ distinct values per batch, and each advantage has standard deviation at most $\sqrt{(K-1)/K}\,/\,(2\sqrt G)$.
\item The expected update is $\E[U] = -\frac{K-1}{KG}\,\nabla_\theta F_G(\pi_\theta)$.
\item Let $\mathcal{X}_1 = \{x : 1/G \le q(x) \le 1 - 1/G\}$ be the outcomes whose target probability $G$ draws can resolve. At $\pi_\theta = q$,
\begin{equation*}
\frac{1}{2\sqrt{2G}}\sum_{x \in \mathcal{X}_1}\sqrt{q(x)\big(1-q(x)\big)} \;\le\; B_G(q) \;\le\; \frac{1}{2\sqrt{G}}\sum_{x \in \mathcal{X}}\sqrt{q(x)\big(1-q(x)\big)} .
\end{equation*}
\item Every minimizer of $F_G$ over all distributions on $\mathcal{X}$ is within TV $B_G(q)$ of $q$. For some targets, $F_G$ is smaller at a distribution that gives zero probability to every outcome with $q(x) < 1/G$ than at $q$ itself.
\end{enumerate}
\end{proposition}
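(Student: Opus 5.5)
Part (ii) is the core of the proposition, and I would prove it first with Lemma~\ref{lem:reduction} at $N = KG$. It reduces $\E[U]$ to the outcome-level update $\frac{1}{KG}\sum_i (\bar T - T_{\ell(i)})\,s(x_i)$. Fix a rollout $i$ in subgroup $\ell$. For $m \ne \ell$, $T_m$ is independent of $x_i$, so $\E[T_m\, s(x_i)] = \E[T_m]\,\E[s(x_i)] = 0$ by Lemma~\ref{lem:score}(a). Only the own-subgroup term survives.

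For that term I would use the score-function identity on the joint law of the $G$ draws of subgroup $\ell$:
\[
\nabla_\theta \E[T_\ell] = \E\Big[T_\ell \sum_{i \in \ell} s(x_i)\Big], \qquad \text{so} \qquad \sum_{i\in\ell}\E[T_\ell\, s(x_i)] = \nabla_\theta F_G(\pi_\theta).
\]
Summing over rollouts, the coefficient of $T_\ell$ in $\bar T - T_{\ell(i)}$ is $1/K - 1$. Summing over $\ell$ then gives $\frac{1}{KG}\cdot K\cdot\big(\tfrac1K - 1\big)\nabla_\theta F_G = -\frac{K-1}{KG}\nabla_\theta F_G$.

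For part (i), the advantages are constant on each subgroup, so there are at most $K$ distinct values. Since the $T_m$ are i.i.d., writing $\bar T - T_\ell = \frac1K\sum_{m\ne \ell}T_m - \frac{K-1}{K}T_\ell$ gives $\mathrm{Var} = \frac{K-1}{K}\mathrm{Var}(T_\ell)$. Changing one of the $G$ draws moves $\hat p$ by $1/G$ at two outcomes, so it changes $T_\ell$ by at most $1/G$. The bounded-differences form of Efron--Stein, $\mathrm{Var} \le \frac14\sum_i c_i^2$, then gives $\mathrm{Var}(T_\ell) \le 1/(4G)$, which yields the stated standard deviation.

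For part (iii), at $\pi_\theta = q$ we have $B_G(q) = \frac12\sum_x \E|\hat p_G(x) - q(x)|$ with $G\hat p_G(x) \sim \mathrm{Bin}(G, q(x))$. The upper bound follows from Jensen: $\E|\hat p_G(x) - q(x)| \le \sqrt{q(x)(1-q(x))/G}$. The lower bound is the step I expect to be the main obstacle. It needs a lower bound on the binomial mean absolute deviation, $\E|\mathrm{Bin}(G,p) - Gp| \ge \sqrt{Gp(1-p)/2}$ for $1/G \le p \le 1 - 1/G$, of the type proved by Berend and Kontorovich. I would first try to cite this bound. Failing that, I would derive it from de Moivre's closed form for the binomial mean absolute deviation together with Stirling-type bounds. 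Applying it to $x\in\mathcal{X}_1$ and discarding the remaining outcomes, whose terms are nonnegative, gives the left inequality.

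For the first claim of part (iv), two facts suffice. First, $\E[\hat p_G] = \pi$ and TV is convex, so Jensen gives $F_G(\pi) \ge \tv(\pi,q)$. Second, $F_G(q) = B_G(q)$ because $\tv(q,q) = 0$. Hence any minimizer $\pi^\star$ satisfies $\tv(\pi^\star,q) \le F_G(\pi^\star) \le F_G(q) = B_G(q)$.

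For the second claim I would build an explicit target: $q = (1 - c, \varepsilon, \dots, \varepsilon)$ with $m$ small outcomes, $m\varepsilon = c$, and $\varepsilon < 1/G$, compared against the point mass $\delta$ on the large outcome. Here $F_G(\delta) = c$ exactly. At $q$, write $D = \sum_{i\text{ small}} (\hat p(i) - \varepsilon)$, so the large outcome contributes $\frac12\E|D|$. Each small outcome contributes $\frac12\E|\hat p(i) - \varepsilon| = \varepsilon(1-\varepsilon)^G$. Thus
\[
F_G(q) = \tfrac12\E|D| + c(1-\varepsilon)^G .
\]
As $m\to\infty$ with $c$ fixed, $(1-\varepsilon)^G \to 1$. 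Meanwhile $GD$ tends in law to a centered Poisson$(cG)$ variable, so $\E|D|$ stays bounded below by a positive constant of order $\sqrt{c/G}$. Therefore $F_G(q) > c = F_G(\delta)$ for $m$ large enough. A concrete numerical instance, say $G = 64$, $c = 0.2$, and $m$ large, can also be evaluated exactly to exhibit the inequality.
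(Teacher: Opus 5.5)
Your proposal is correct. For parts (i)--(iii) and the first claim of (iv) it is the paper's argument:
\begin{itemize}
\item the $1/G$ bounded difference combined with Efron--Stein;
\item independence across subgroups together with the score identity $\sum_{i\in\ell}\E[T_\ell\, s(x_i)] = \nabla_\theta F_G(\pi_\theta)$;
\item Jensen for the upper bound in (iii), which the paper writes as Cauchy--Schwarz, and the Berend--Kontorovich binomial bound for the lower bound, which the paper simply cites;
\item Jensen plus minimality for the TV bound on minimizers.
\end{itemize}

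You diverge only on the second claim of (iv). You and the paper use the same target shape: one head outcome and many equal tail outcomes of mass below $1/G$. You also share the exact tail contribution $\varepsilon(1-\varepsilon)^G$ per outcome. The paper then fixes $G=64$, $M=64$, $\beta=0.1$ and lower-bounds the head term by Berend--Kontorovich, certifying $B_G(q) \ge 0.0905 + 0.0133 > 0.1$. You instead let $m\to\infty$.

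One step of your limit argument is misstated. $GD + Gc$ counts the draws that land in the tail set, and that set has total mass $c$. So $GD + Gc$ is exactly $\mathrm{Bin}(G,c)$ for every $m$. It does not tend to a Poisson$(cG)$ law; that limit would need $G\to\infty$.

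The correction only helps you. $\E|D|$ is then a fixed positive number that does not depend on $m$. Bernoulli's inequality gives $c\bigl(1-(1-\varepsilon)^G\bigr) \le cG\varepsilon = c^2G/m$. Hence any $m > \max\{cG,\ 2c^2G/\E|D|\}$ yields $F_G(q) > c = F_G(\delta)$.

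The two routes buy different things. Yours needs no numerical evaluation and works for every $G$ and every tail mass $0 < c \le 1-1/G$. The paper's gives an explicit certificate at the group size used in training, with only $64$ tail outcomes.
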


Part (i) limits how much the training signal can vary within a batch. At $K = 4$ and $G = 64$ the bound is $0.054$, and Appendix~\ref{app:decomposition} measures $0.031$. Part (ii) says that, in expectation, the repaired reward performs gradient descent on the surrogate loss $F_G$, the expected TV of $G$ draws, and not on the model's own TV. The factor $(K-1)/(KG) \approx 0.012$ matters little, because AdamW is nearly invariant to a constant rescaling of the gradient. Parts (iii) and (iv) concern the surrogate near the target. Even a model that samples exactly from $q$ has $F_G(q) = B_G(q) > 0$, so the surrogate penalizes the target itself by the sampling error of $G$ draws. Suppose every outcome in the support of $q$ has $q(x) \ge 1/G$. If the support has at least two outcomes, each of them then also has $q(x) \le 1 - 1/G$, so the support lies in $\mathcal{X}_1$ and the two sides of part (iii) differ by a factor of $\sqrt2$. The bias is then of order $\frac{1}{\sqrt G}\sum_x\sqrt{q(x)(1-q(x))}$, the rate quoted in Section~\ref{sec:simpler}. When many outcomes have target probability below $1/G$, the surrogate can prefer to give them zero probability.

\subsection{Proofs}
\label{app:scalarproofs}

The bound of part (i) rests on the following lemma.

\begin{lemma}[The subgroup TV varies little]
\label{lem:dispersion}
Let $T_\ell$ be the TV of a subgroup of $G$ i.i.d.\ draws. Then changing one draw changes $T_\ell$ by at most $1/G$, and $\mathrm{Var}(T_\ell) \le 1/(4G)$.
\end{lemma}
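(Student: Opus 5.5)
The plan is to prove the bounded-difference property directly from the definition of TV, and then obtain the variance bound from the Efron--Stein inequality.

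\textbf{Step 1: bounded differences.} Write $T_\ell = \tv(\hat p^{(\ell)}, q) = \tfrac12\sum_x |\hat p^{(\ell)}(x) - q(x)|$. Replace one draw, with outcome $a$, by a draw with outcome $b \ne a$. Then $\hat p^{(\ell)}(a)$ falls by $1/G$, $\hat p^{(\ell)}(b)$ rises by $1/G$, and every other coordinate is unchanged. Each absolute value is $1$-Lipschitz in its argument, so the sum $\sum_x |\hat p^{(\ell)}(x) - q(x)|$ changes by at most $2/G$. After the factor $\tfrac12$, $T_\ell$ changes by at most $1/G$. Equivalently, TV is $\tfrac12\|\cdot\|_1$ and the two empirical distributions are at $\ell_1$ distance at most $2/G$.

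\textbf{Step 2: variance.} $T_\ell$ is a function of $G$ independent draws (Lemma~\ref{lem:reduction}, Step 1), and by Step 1 it has bounded differences $c_i = 1/G$. I would use Efron--Stein in its bounded-difference form. For a function with bounded differences $c_i$, $\mathrm{Var} \le \tfrac14\sum_i c_i^2$.

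One route goes through the independent-copy form $\mathrm{Var} \le \tfrac12\sum_i \E[(f - f^{(i)})^2]$, where $f^{(i)}$ resamples coordinate $i$. That route alone gives only $\tfrac12\cdot G\cdot G^{-2} = 1/(2G)$. To get the factor $\tfrac14$, I would instead use the form $\mathrm{Var} \le \sum_i \E[\mathrm{Var}_i(f)]$, where $\mathrm{Var}_i$ is the variance over coordinate $i$ with the others fixed. For fixed other coordinates, $f$ as a function of coordinate $i$ takes values in an interval of length at most $c_i$. Popoviciu's inequality then gives $\mathrm{Var}_i(f) \le c_i^2/4$. Summing over $i$ gives $\mathrm{Var}(T_\ell) \le G\cdot \tfrac{1}{4G^2} = 1/(4G)$.

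\textbf{Main obstacle.} The only delicate point is getting the constant $\tfrac14$ rather than $\tfrac12$. Choosing the conditional-variance form of Efron--Stein together with Popoviciu resolves it.

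As a fallback, one can argue with McDiarmid's inequality: the sub-Gaussian variance proxy is $\sum_i c_i^2/4 = 1/(4G)$, and this bounds the variance. Either route yields the stated bound, which part~(i) of Proposition~\ref{prop:scalar} then uses.
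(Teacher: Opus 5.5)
Your proposal is correct and follows the paper's proof: the same bounded-difference argument (one coordinate of $\hat p^{(\ell)}$ moves by $-1/G$ and another by $+1/G$, so $T_\ell$ moves by at most $1/G$), then the conditional-variance form of Efron--Stein, bounding each one-draw variance by $1/(4G^2)$ because the values lie in an interval of length $1/G$, and summing over the $G$ draws. Your McDiarmid fallback is also valid, since a sub-Gaussian variance proxy of $\sum_i c_i^2/4$ bounds the variance, but the main route already gives the stated constant.
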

\begin{proof}
Replacing one draw moves $\hat p^{(\ell)}$ by $-1/G$ in one coordinate and by $+1/G$ in another. Since $T_\ell = \tfrac12\|\hat p^{(\ell)} - q\|_1$, it moves by at most $\tfrac12 \cdot \tfrac2G = \tfrac1G$. The Efron--Stein inequality bounds the variance of a function of independent draws by the sum over draws of the expected variance over that draw with the others fixed. With the others fixed, $T_\ell$ lies in an interval of length at most $1/G$, so its variance over one draw is at most $1/(4G^2)$. Summing over the $G$ draws gives $\mathrm{Var}(T_\ell) \le 1/(4G)$.
\end{proof}

\paragraph{Proof of part (i).}
The advantage $\tilde A_i = \bar T - T_{\ell(i)}$ depends on $i$ only through its subgroup, so it takes at most $K$ distinct values in a batch. For the bound, write
\begin{equation*}
\bar T - T_\ell = \frac1K\sum_{\ell' \ne \ell} T_{\ell'} - \frac{K-1}{K}\,T_\ell .
\end{equation*}
The TVs of different subgroups are i.i.d., so the two parts are independent and
\begin{equation*}
\mathrm{Var}(\bar T - T_\ell) = \frac{K-1}{K^2}\mathrm{Var}(T_\ell) + \frac{(K-1)^2}{K^2}\mathrm{Var}(T_\ell) = \frac{K-1}{K}\,\mathrm{Var}(T_\ell) \le \frac{K-1}{4KG}
\end{equation*}
by Lemma~\ref{lem:dispersion}. Taking the square root gives the bound.

The same number also bounds the expected spread of the advantages within one batch. The advantages have mean zero, so the expected within-batch variance is
\begin{equation*}
\E\Big[\frac1K\sum_{\ell=1}^K (\bar T - T_\ell)^2\Big] = \mathrm{Var}(\bar T - T_\ell).
\end{equation*}
By Jensen's inequality, the expected within-batch standard deviation is at most the square root of this value. \qed

\paragraph{Proof of part (ii).}
We write the update as a sum over subgroups and show that only the correlation between a subgroup's TV and its own summed score survives. Let $S_\ell = \sum_{i:\,\ell(i) = \ell} s(x_i)$ be the summed score of subgroup $\ell$. Three facts are needed.
\begin{enumerate}[leftmargin=*,itemsep=1pt,topsep=2pt]
\item $\E[S_\ell] = 0$, by Lemma~\ref{lem:score}(a) applied to each draw.
\item $\E[T_{\ell'}\,S_\ell] = \E[T_{\ell'}]\,\E[S_\ell] = 0$ for $\ell' \ne \ell$, because different subgroups are independent.
\item $\E[T_\ell\,S_\ell] = \nabla_\theta\,\E[T_\ell] = \nabla_\theta F_G(\pi_\theta)$. This is the score-function identity for a function of $G$ i.i.d.\ draws, obtained by differentiating the finite sum $\E[T_\ell] = \sum_{x_1,\dots,x_G} T_\ell(x_1,\dots,x_G)\prod_{i} \pi_\theta(x_i)$.
\end{enumerate}
Grouping the update by subgroup and substituting $\tilde A_i = \bar T - T_{\ell(i)}$,
\begin{align*}
\E[U] &= \frac{1}{KG}\sum_{\ell=1}^{K}\E\big[(\bar T - T_\ell)\,S_\ell\big]
= \frac{1}{KG}\sum_{\ell=1}^{K}\Big(\frac1K\sum_{\ell'=1}^{K}\E[T_{\ell'}\,S_\ell] - \E[T_\ell\,S_\ell]\Big) \\
&= -\frac{1}{KG}\cdot\frac{K-1}{K}\sum_{\ell=1}^{K}\E[T_\ell\,S_\ell],
\end{align*}
where the last step uses fact 2 to keep only the term $\ell' = \ell$ of the inner sum. By fact 3, each of the $K$ terms of the remaining sum equals $\nabla_\theta F_G(\pi_\theta)$, so the sum is $K\,\nabla_\theta F_G(\pi_\theta)$. Substituting,
\begin{equation*}
\E[U] = -\frac{K-1}{KG}\,\nabla_\theta F_G(\pi_\theta).
\end{equation*}
\qed

\paragraph{Proof of part (iii).}
Both bounds rest on the fact that $G\hat p_G(x)$ is a binomial count with $G$ trials and success probability $\pi_\theta(x)$.

For the upper bound, fix any $\pi_\theta$. For each outcome $x$, the triangle inequality and then the Cauchy--Schwarz inequality give
\begin{equation*}
\E\big[|\hat p_G(x) - q(x)|\big] \le \big|\pi_\theta(x) - q(x)\big| + \E\big[|\hat p_G(x) - \pi_\theta(x)|\big]
\le \big|\pi_\theta(x) - q(x)\big| + \sqrt{\frac{\pi_\theta(x)\big(1-\pi_\theta(x)\big)}{G}},
\end{equation*}
where the second step uses that the binomial count has variance $G\pi_\theta(x)(1-\pi_\theta(x))$. Summing over $x$ and halving gives
\begin{equation*}
B_G(\pi_\theta) \le \frac{1}{2\sqrt G}\sum_x \sqrt{\pi_\theta(x)\big(1-\pi_\theta(x)\big)},
\end{equation*}
which at $\pi_\theta = q$ is the upper bound.

For the lower bound, take $\pi_\theta = q$. Since $\tv(q,q) = 0$,
\begin{equation*}
B_G(q) = F_G(q) = \frac{1}{2G}\sum_x \E\big[|G\hat p_G(x) - Gq(x)|\big],
\end{equation*}
where $G\hat p_G(x)$ is binomial with $G$ trials and success probability $q(x)$. For $G \ge 2$, a binomial with $G$ trials and success probability $v \in [1/G,\, 1 - 1/G]$ has mean absolute deviation at least $\sqrt{Gv(1-v)/2}$ \citep[Theorem~1]{berend2013sharp}. Applying this bound with $v = q(x)$ to each outcome in $\mathcal{X}_1$ and dropping the other outcomes, whose terms are nonnegative, gives the lower bound. \qed

\paragraph{Proof of part (iv).}
Let $\pi^\dagger$ minimize $F_G$ over all distributions on $\mathcal{X}$. Two inequalities bound its distance to $q$. First, $\tv(\cdot, q)$ is convex and $\E[\hat p_G] = \pi^\dagger$ when the draws come from $\pi^\dagger$, so Jensen's inequality gives $F_G(\pi^\dagger) \ge \tv(\pi^\dagger, q)$. Second, $\pi^\dagger$ is a minimizer, so $F_G(\pi^\dagger) \le F_G(q) = B_G(q)$. Chaining the two,
\begin{equation*}
\tv(\pi^\dagger,q) \le F_G(\pi^\dagger) \le B_G(q).
\end{equation*}

For the second claim, we give a target for which deleting the rare outcomes beats sampling from the target. Take $G = 64$ and a target with one head outcome of probability $1-\beta$ and $M$ tail outcomes of probability $\eta$ each,
\begin{equation*}
q = \big(1-\beta,\; \eta,\; \dots,\; \eta\big), \qquad \beta = 0.1, \quad M = 64, \quad \eta = \beta/M,
\end{equation*}
so every tail outcome has probability below $1/G$. We compare two distributions.
\begin{enumerate}[leftmargin=*,itemsep=1pt,topsep=2pt]
\item The point mass $\delta_0$ on the head outcome gives zero probability to every tail outcome. Its empirical distribution is always $\delta_0$ itself, so $F_G(\delta_0) = \tv(\delta_0, q) = \beta = 0.1$ exactly.
\item The target $q$ has $F_G(q) = B_G(q)$, and we bound $B_G(q)$ from below one outcome at a time. Each tail outcome $x$ contributes to $\E\big[\|\hat p_G - q\|_1\big]$ the amount
\begin{equation*}
\E\big[|\hat p_G(x) - \eta|\big] = 2\,\E\big[(\eta - \hat p_G(x))^+\big] \ge 2\eta(1-\eta)^G .
\end{equation*}
The equality holds because $\hat p_G(x) - \eta$ has mean zero. The inequality keeps the event that $x$ is never drawn, which has probability $(1-\eta)^G$. It is in fact an equality here, because $G\eta < 1$, so $\hat p_G(x)$ falls below $\eta$ only when $x$ is never drawn. The head outcome contributes at least $\sqrt{\beta(1-\beta)/(2G)}$ by the binomial bound of part (iii), since $1-\beta \in [1/G, 1-1/G]$. Summing over the $M$ tail outcomes with $M\eta = \beta$, adding the head, and halving,
\begin{equation*}
B_G(q) \;\ge\; \beta(1-\eta)^G + \frac{1}{2\sqrt2}\sqrt{\frac{\beta(1-\beta)}{G}} \;=\; 0.0905 + 0.0133 \;>\; 0.1 .
\end{equation*}
\end{enumerate}
Hence $F_G(q) > 0.1 = F_G(\delta_0)$, and the surrogate strictly prefers the distribution without the tail to the target itself. An exact binomial computation gives $B_G(q) = 0.1056$, so the margin is $0.0056$. The slack in the bound above comes from the head term, for which the binomial bound gives $0.0133$ against an exact $0.0151$. Outcomes rarer than $1/G$ cost the surrogate more in sampling fluctuation than their deletion costs in TV. \qed

\subsection{When the Bias Matters}

Parts (ii) and (iii) predict that the expected update of the group-scalar reward shrinks once the model's TV to the target falls to the size of the bias. By part (ii) and the definition of $B_G$,
\begin{equation*}
\E[U] = -\frac{K-1}{KG}\Big(\nabla_\theta \tv(\pi_\theta,q) + \nabla_\theta B_G(\pi_\theta)\Big).
\end{equation*}
Where the gradient of the bias is negligible, the expected update is a scaled negative gradient of the model's own TV. We do not bound $\nabla_\theta B_G$. We instead locate by simulation where this approximation breaks.

\paragraph{Setup.}
We take a uniform target $q$ on $d$ outcomes and a model parameterized by a softmax over outcomes, with logits $\theta$. The model moves along the path
\begin{equation*}
\pi = (1-t)\,\pi_0 + t\,q, \qquad t \in [0,1],
\end{equation*}
where $\pi_0$ is uniform on half of the outcomes and has TV $1/2$ to $q$. We use $K = 4$, $G \in \{8, 64, 256\}$, and $d \in \{2, 10, 100\}$, nine settings in all. At $15$ points of the path, with TV to $q$ spaced geometrically between $0.45$ and $0.001$, we estimate the expected update by Monte Carlo and subtract the Monte Carlo variance from its squared norm. We then divide its norm by the norm of $\frac{K-1}{KG}\nabla_\theta\tv(\pi_\theta,q)$. This ratio is close to one where the approximation holds. For a uniform target, the bias scale of part (iii) is $\frac{1}{\sqrt G}\sum_x\sqrt{q(x)(1-q(x))} = \sqrt{(d-1)/G}$. Scanning from the far end of the path, we read off the TV at which the ratio first falls below $1/2$, by linear interpolation in $\log\tv$ between the two neighboring points. We write this crossing as $\kappa\sqrt{(d-1)/G}$.

\paragraph{Result.}
The crossing is proportional to the bias scale. In the five settings where the ratio is at least $0.75$ at the two largest simulated distances, $\kappa$ lies between $0.334$ and $0.342$, and a log-log fit of the crossings against $\sqrt{(d-1)/G}$ has slope $1.01$. In two more settings the ratio is below $0.75$ at one of the two largest simulated distances, so the bias already matters at the start of the path. They give $\kappa = 0.340$ and $0.370$ if the crossing is read off anyway. In the remaining two settings, $(G,d) = (8,100)$ and $(64,100)$, the ratio is below $1/2$ along the whole path. Our wide-support families at $G = 64$ are in this regime. Wherever the norm of the Monte Carlo estimate exceeds five times its standard error, the direction of the expected update has cosine similarity at least $0.98$ with $-\nabla_\theta\tv(\pi_\theta,q)$, so only its size shrinks. Extending the scaling from uniform to general targets is an extrapolation based on part (iii), not a separate measurement.

\subsection{Numerical Checks}
\label{app:scalarchecks}

We check Lemma~\ref{lem:degeneracy}, the count of distinct advantages, and the bound of part (i) on sampled batches, and we check part (ii) against exact enumeration at $|\mathcal{X}| = 2$. We also compute the bias of part (iii), the example of part (iv), and the simulation above. The values of $\kappa$ and the slope quoted above are computed against $\sqrt{(d-1)/G}$, the scale of part (iii).

\section{Proofs for the Sign Witness}
\label{app:sign}

Proposition~\ref{prop:sign} states that the sign witness trains the model toward the target except on low-mass outcomes. We restate it and prove the three parts in order. We then simulate what the proof leaves open, and relate the sign witness to the property-ratio reward of \citet{huang2026alignment}.

The notation of Appendix~\ref{app:tools} remains in force. Rollout $i$ receives the sign witness $A_i = \mathrm{sign}\big(q(x_i) - \hat p_{-i}(x_i)\big)$, with $\mathrm{sign}(0) = 0$, so a lone invalid rollout again receives zero. The population sign of outcome $x$ is $\mathrm{sign}\big(q(x) - \pi_\theta(x)\big)$, and an outcome is low-mass if $0 < q(x) < 1/(G-1)$. We write $\bar a(x) = \E[A_i \mid x_i = x]$ for the expected advantage of a rollout with outcome $x$, which does not depend on $i$. In the proposition, the expected update is $\E\big[\tfrac1G\sum_i A_i\, s(x_i)\big]$ with the advantages before centering. The paragraph on centering in Appendix~\ref{app:signlowmass} shows that the conclusions of part (iii) also hold for the update that GRPO applies, $\E\big[\tfrac1G\sum_i \tilde A_i\, s(x_i)\big]$, when $G \ge 3$.

\begin{restatedbox}
\propsign*
\end{restatedbox}

\paragraph{Proof idea.}
Part (i) combines the dual form of total variation with the score identity. Part (ii) holds because, given the outcome of rollout $i$, the leave-one-out count is binomial and concentrates around its mean. Part (iii) uses the fact that the leave-one-out frequency takes values on a grid with spacing $1/(G-1)$. For a low-mass outcome, it is either $0$ or above the target, so the sign only records whether the outcome appeared among the other rollouts.

\subsection{Part (i): The Population Sign Is the Witness of Total Variation}

For distributions $\pi$ and $q$ and any function $f$ with $|f(x)| \le 1$ for all $x$,
\begin{equation*}
\sum_x f(x)\big(\pi(x) - q(x)\big) \le \sum_x \big|\pi(x) - q(x)\big| = 2\,\tv(\pi,q),
\end{equation*}
with equality if and only if $f(x) = \mathrm{sign}(\pi(x) - q(x))$ wherever $\pi(x) \ne q(x)$. This is the dual representation of total variation \citep{muller1997integral,sriperumbudur2012empirical}. Its maximizer, the witness of total variation, is $\mathrm{sign}(\pi - q)$, the negative of the population sign. This is the first claim of part (i).

For the expected update, assume $\pi_\theta(x) \neq q(x)$ for every $x$. Then each term $|\pi_\theta(x) - q(x)|$ is differentiable, with gradient $\mathrm{sign}(\pi_\theta(x) - q(x))\,\nabla_\theta\pi_\theta(x)$. The population sign of rollout $i$ is a function of $x_i$ alone, so Lemma~\ref{lem:reduction} with $N = G$ applies, and each of the $G$ terms of the update has the same expectation. Lemma~\ref{lem:score}(b) with $f = \mathrm{sign}(q - \pi_\theta)$ then gives
\begin{equation*}
\E_{x\sim\pi_\theta}\Big[\mathrm{sign}\big(q(x) - \pi_\theta(x)\big)\,s(x)\Big] = -\sum_x \mathrm{sign}\big(\pi_\theta(x) - q(x)\big)\,\nabla_\theta\pi_\theta(x) = -2\,\nabla_\theta\tv(\pi_\theta,q).
\end{equation*}
This is the second claim of part (i). \qed

\subsection{Part (ii): The Sign Computed From the Group Rarely Differs}

Condition on $x_i = x$. The leave-one-out count excludes rollout $i$, and the other $G-1$ draws are independent of $x_i$, so $(G-1)\,\hat p_{-i}(x)$ is exactly binomial with $G-1$ trials and success probability $\pi_\theta(x)$. Suppose $\delta = q(x) - \pi_\theta(x) > 0$, so the population sign is $+1$. The sign computed from the group is not $+1$ only if $\hat p_{-i}(x) \ge q(x)$, that is, only if $\hat p_{-i}(x) - \pi_\theta(x) \ge \delta$. This event includes the tie $\hat p_{-i}(x) = q(x)$, where the sign is $0$. Hoeffding's inequality for the mean of $G-1$ independent Bernoulli draws bounds its probability by $\exp(-2(G-1)\delta^2)$. The case $\delta < 0$ is symmetric. \qed

\subsection{Part (iii): Low-Mass Outcomes}
\label{app:signlowmass}

\paragraph{The expected advantage of a low-mass outcome.}
Fix a low-mass outcome $x$, so $0 < q(x) < 1/(G-1)$. The leave-one-out frequency $\hat p_{-i}(x)$ takes values in $\{0, \tfrac{1}{G-1}, \tfrac{2}{G-1}, \dots\}$, so it is never strictly between $0$ and $q(x)$ and never equal to $q(x)$. The sign is therefore $+1$ exactly when none of the other $G-1$ rollouts produced $x$, and $-1$ otherwise. Given $x_i = x$, the expected advantage is
\begin{equation}
\bar a(x) = (+1)\cdot\big(1-\pi_\theta(x)\big)^{G-1} + (-1)\cdot\Big(1 - \big(1-\pi_\theta(x)\big)^{G-1}\Big) = 2\big(1-\pi_\theta(x)\big)^{G-1} - 1 .
\label{eq:signcredit}
\end{equation}
The target probability $q(x)$ does not appear. The expected advantage is positive exactly when $\pi_\theta(x) < 1 - 2^{-1/(G-1)}$, which is $0.0109$ at $G = 64$ (close to $\ln 2/(G-1)$), whether or not $\pi_\theta(x)$ already exceeds its target.

\paragraph{Stationary points of the expected update.}
Suppose that the model can assign any probability to each outcome. We parameterize it by a softmax over outcomes with logits $\theta = (\theta_{x})_{x \in \mathcal{X}}$, so $\pi_\theta(x) \propto \exp(\theta_x)$ and
\begin{equation*}
\frac{\partial \pi_\theta(x)}{\partial\theta_{x'}} = \pi_\theta(x)\big(\mathbb{1}[x = x'] - \pi_\theta(x')\big).
\end{equation*}
Conditioning on $x_i$ and applying Lemma~\ref{lem:score}(b) with $f = \bar a$ gives
\begin{equation*}
\E[A_i\, s(x_i)] = \sum_x \bar a(x)\,\nabla_\theta\pi_\theta(x).
\end{equation*}
Substituting the softmax derivative, the component of this update along $\theta_{x'}$ is
\begin{equation*}
\sum_x \bar a(x)\,\pi_\theta(x)\big(\mathbb{1}[x = x'] - \pi_\theta(x')\big) = \pi_\theta(x')\Big(\bar a(x') - \sum_x \pi_\theta(x)\,\bar a(x)\Big).
\end{equation*}
For this paragraph we drop the positivity assumption of Appendix~\ref{app:tools} and read the right-hand side as a vector field on the closed simplex, where outcomes may have probability zero. There $\bar a(x')$ is given by its binomial formula, such as Equation~\ref{eq:signcredit} for a low-mass $x'$, and the component along $\theta_{x'}$ vanishes whenever $\pi(x') = 0$. We call a distribution $\pi$ a stationary point when every component vanishes, that is, when $\bar a$ takes a common value $\lambda$ on the support of $\pi$. Outcomes outside the support, which are the outcomes that the stationary point sets to zero, impose no condition. The condition does not depend on the parameterization. In the interior of the simplex, $\sum_x \bar a(x)\,\nabla_\theta\pi_\theta(x) = 0$ holds for any parameterization that can move $\pi$ in every direction of the simplex exactly when $\bar a$ is constant.

At a stationary point, every low-mass outcome $x$ in the support satisfies $2(1-\pi(x))^{G-1} - 1 = \lambda$ by Equation~\ref{eq:signcredit}. The map $v \mapsto 2(1-v)^{G-1} - 1$ is strictly decreasing on $[0,1]$, so all these outcomes have the same probability, $1 - \big((1+\lambda)/2\big)^{1/(G-1)}$. Their total probability is this value times their number. It depends on how many low-mass outcomes the support contains and on the common level $\lambda$, but not on their target probabilities.

\paragraph{When the target is not stationary.}
At $\pi = q$, the support is the support of $q$. Suppose two low-mass outcomes $x$ and $x'$ have $q(x) \ne q(x')$. Equation~\ref{eq:signcredit} gives them the expected advantages $2(1-q(x))^{G-1} - 1$ and $2(1-q(x'))^{G-1} - 1$, which differ because the map above is strictly decreasing. The expected advantage is therefore not constant on the support, and $q$ is not a stationary point. A Zipf target gives a different probability to each outcome, so it satisfies this condition as soon as it has two low-mass outcomes.

The requirement of two low-mass outcomes with different target probabilities cannot be dropped. Consider the binary target with outcomes $a$ and $b$ and
\begin{equation*}
q(a) = 1-2^{-1/(G-1)}, \qquad q(b) = 2^{-1/(G-1)} .
\end{equation*}
For every $G \ge 2$ this target is stationary, because both outcomes have expected advantage $0$. Outcome $a$ is low-mass, and Equation~\ref{eq:signcredit} gives it the expected advantage $2\cdot 2^{-1} - 1 = 0$. For $G \ge 3$, outcome $b$ is not low-mass, so we compute its expected advantage directly. Since $q(b) > (G-2)/(G-1)$, the only value of the leave-one-out frequency at or above $q(b)$ is $1$. The sign of $b$ is therefore $-1$ exactly when all $G-1$ other rollouts produce $b$, which has probability $q(b)^{G-1} = 1/2$, and $+1$ otherwise. Its expected advantage is $\tfrac12 - \tfrac12 = 0$. For $G = 2$, both outcomes are low-mass and have expected advantage $0$ by Equation~\ref{eq:signcredit}.

\paragraph{Centering does not change these conclusions.}
GRPO trains on the centered advantage $\tilde A_i = A_i - \bar A$. For $G \ge 3$, the two conclusions about stationary points also hold for the expected centered update, with $G-2$ in place of $G-1$ in the exponent. To show this, we compute the expected centered advantage of a low-mass outcome in three steps. It turns out to be a strictly decreasing function of $\pi_\theta(x)$ plus a constant, as in Equation~\ref{eq:signcredit}.

\textbf{Step 1: separate the rollout's own advantage.} By the same conditioning argument as above, the expected centered update is $\sum_x \bar a_{\mathrm{c}}(x)\,\nabla_\theta\pi_\theta(x)$. Since $\bar A$ contains $A_i/G$, the expected centered advantage is
\begin{equation*}
\bar a_{\mathrm{c}}(x) = \E[\tilde A_i \mid x_i = x] = \frac{G-1}{G}\,\bar a(x) - \frac1G\sum_{j \ne i}\E[A_j \mid x_i = x].
\end{equation*}

\textbf{Step 2: the advantage of another rollout.} Fix $j \ne i$ and condition also on $x_j = x'$. Let $c$ be the number of the $G-2$ rollouts other than $i$ and $j$ whose outcome is $x'$, a binomial count with $G-2$ trials and success probability $\pi_\theta(x')$. Define
\begin{equation*}
h(x') = \E\Big[\mathrm{sign}\Big(q(x') - \frac{c}{G-1}\Big)\Big], \qquad h^+(x') = \E\Big[\mathrm{sign}\Big(q(x') - \frac{c+1}{G-1}\Big)\Big].
\end{equation*}
If $x' \ne x$, rollout $i$ does not count toward $\hat p_{-j}(x')$, so $\E[A_j \mid x_i = x, x_j = x'] = h(x')$. If $x' = x$, rollout $i$ adds one to the count, and the conditional expectation is $h^+(x)$. Averaging over $x'$,
\begin{equation*}
\E[A_j \mid x_i = x] = \sum_{x'} \pi_\theta(x')\,h(x') + \pi_\theta(x)\big(h^+(x) - h(x)\big).
\end{equation*}
The first term does not depend on $x$. The expression is the same for every $j \ne i$, so the sum in Step 1 is $G-1$ times it.

\textbf{Step 3: evaluate at a low-mass outcome.} For a low-mass $x$, the grid argument gives $h(x) = 2(1-\pi_\theta(x))^{G-2} - 1$ and $h^+(x) = -1$, because a count of at least one puts the leave-one-out frequency at $1/(G-1) > q(x)$ or above. Write $v = \pi_\theta(x)$. Substituting Step 2 and Equation~\ref{eq:signcredit} into Step 1,
\begin{align*}
\bar a_{\mathrm{c}}(x) &= \frac{G-1}{G}\Big(2(1-v)^{G-1} - 1 + 2v(1-v)^{G-2}\Big) + C \\
&= \frac{G-1}{G}\Big(2(1-v)^{G-2}\big[(1-v) + v\big] - 1\Big) + C
= \frac{G-1}{G}\Big(2(1-v)^{G-2} - 1\Big) + C,
\end{align*}
where $C = -\frac{G-1}{G}\sum_{x'} \pi_\theta(x')\,h(x')$ does not depend on $x$. The first line uses $-v\big(h^+(x) - h(x)\big) = 2v(1-v)^{G-2}$.

For $G \ge 3$, $\bar a_{\mathrm{c}}$ is therefore a constant plus a strictly decreasing function of $\pi_\theta(x)$ alone. At a stationary point of the centered update, all low-mass outcomes in the support again share one probability, and $\pi = q$ is again not stationary when two low-mass outcomes have different target probabilities. The constant $C$ moves the point where $\bar a_{\mathrm{c}}$ changes sign away from $1 - 2^{-1/(G-1)}$, and the binary example above is stationary only for the uncentered update.

\subsection{Simulating the Sign Witness on Zipf Targets}

The proof shows that the sign witness cannot hold low-mass outcomes with different target probabilities at their targets. It does not say whether their total probability ends too high or too low. We therefore simulate the replicator flow of the expected update,
\begin{equation*}
\frac{d}{dt}\log\pi_t(x) = \bar a(x) - \sum_{x'} \pi_t(x')\,\bar a(x'),
\end{equation*}
which has the same stationary points. Here the subscript $t$ is time, and $\bar a$ is evaluated at $\pi_t$. We use $G = 64$ and three Zipf targets, with $200$ outcomes and exponent $1.2$, $1{,}000$ outcomes and exponent $1.5$, and $50$ outcomes and exponent $2.0$. The flow starts from $\pi_0(x) \propto x^{-2.5}$ on the outcomes $x = 1, 2, \dots$ of the target. The sign flow raises the total probability of the low-mass outcomes to $1.5$ to $2.4$ times their target probability and stops at TV $0.13$ to $0.47$ from the target. The same flow with the witness advantage reaches the target, with TV below $10^{-4}$.

\subsection{Connection to Property-Ratio Alignment}
\label{app:propertyratio}

\citet{huang2026alignment} train with a per-group $0/1$ reward that pushes the fraction of rollouts with a binary property toward a target ratio $r_t$ (their notation). Let $x_i \in \{0,1\}$ be the property value of rollout $i$, let $\hat r$ be the group's empirical ratio, and let $\epsilon \ge 0$ be a tolerance. We assume that every rollout is valid, since they give invalid rollouts reward $0$ and leave them out of $\hat r$. Their reward is
\begin{equation*}
R_i = \begin{cases} x_i & \text{if } \hat r < r_t - \epsilon, \\ 1 - x_i & \text{if } \hat r > r_t + \epsilon, \\ 0 & \text{otherwise,} \end{cases}
\end{equation*}
and training stops once the ratio lies inside the band. Set $q = \mathrm{Bernoulli}(r_t)$ on $\{0,1\}$. The sign witness computed from the full group is
\begin{equation*}
\sigma_i = \mathrm{sign}\big(q(x_i) - \hat p(x_i)\big) = (2x_i - 1)\,\mathrm{sign}(r_t - \hat r),
\end{equation*}
since $q(1) - \hat p(1) = r_t - \hat r$ and $q(0) - \hat p(0) = \hat r - r_t$. On a group whose ratio lies outside the band, checking the two cases shows $R_i = (1 + \sigma_i)/2$. Constants cancel under centering, so $R_i - \bar R = (\sigma_i - \bar\sigma)/2$. If the group also contains both property values, both standard deviations are positive and the scale cancels too, so their standardized GRPO advantage equals the standardized sign witness. Inside the band, their reward is zero for every rollout, whereas $\sigma_i$ is nonzero unless $\hat r = r_t$. \citet{huang2026alignment} also give a multi-class variant. It rewards rollouts from under-represented attributes with $1$ and those from over-represented attributes with $0$, and removes the rollouts of attributes whose ratio lies inside the band. On a finite alphabet, this is the full-group sign witness shifted and scaled as above, restricted to outcomes outside the band. Their binary targets are $r_t = 0.5$ and $r_t = 0.05$, with rollout sizes of $500$ and $800$, so neither outcome is low-mass and the regime of part (iii) does not arise in their binary setting.

\section{Experimental Setup}
\label{app:setup}

This appendix gives the details behind Section~\ref{sec:impl}: the prompts, the parser, the choice of targets, the training settings, and the evaluation and statistics.

\subsection{Prompts}

Every draw prompt for a synthetic target uses the system message below. The training targets, the unseen-parameter targets, and the hypergeometric targets of Table~\ref{tab:paired} use the original format. The user message in this format states the family and its parameters and asks for one outcome. For the geometric, Poisson, and Zipf families it also gives the PMF formula. The biased-coin prompt gives both probabilities. The binomial prompt gives the parameters and the range of outcomes, and the hypergeometric prompt gives only the parameters. Two examples, for the target of Figure~\ref{fig:phenomenon}a and for a geometric target:
\begin{promptbox}
\textbf{System.} You simulate random draws from probability distributions. When asked for a draw, you output exactly one outcome and nothing else.

\textbf{User.} A biased coin lands on Heads with probability 0.005 and on Tails with probability 0.995. Flip the coin once and report the single outcome. Respond with exactly one word {-}{-} either `Heads' or `Tails' {-}{-} and nothing else.

\textbf{User.} A geometric distribution has success probability p = 0.551. Its outcomes are the positive integers 1, 2, 3, \ldots\ (the number of independent trials up to and including the first success), with probability P(k) = (1-p)\^{}(k-1) * p. Draw a single sample from this distribution and report the single integer outcome. Respond with only the integer and nothing else.
\end{promptbox}
The unseen-family prompts use the evaluation format, a second template with the same system message. The retrain on the evaluation format (Section~\ref{sec:synthetic}) uses this template for its training targets as well. The template states the family and its parameters, gives a PMF formula for the maximum of dice and the logarithmic series and a description in words for the other families, and adds a sentence that names the valid outcomes. An example for the maximum of dice:
\begin{promptbox}
\textbf{User.} A maximum of dice distribution has the maximum of k = 3 independent dice, each with m = 4 sides. The probability that the maximum equals x is (x/m)\^{}k - ((x-1)/m)\^{}k. The valid outcomes are the integers from 1 to 4 inclusive. Draw one random sample from this distribution. Respond with only the outcome and nothing else.
\end{promptbox}

\paragraph{Stating prompts.}
To test whether the model knows the target, a separate prompt asks it to name the distribution and state its parameters, under the system message ``You are a precise probability assistant. Answer concisely in the exact format requested.'' There is one template per family. The geometric template reads as follows.
\begin{promptbox}
\textbf{User.} Consider a random process of repeated independent trials, each succeeding with probability 0.551, where you count the number of trials up to and including the first success. Name the probability distribution this follows and state its parameter value. Answer in exactly this format: `\textless{}distribution name\textgreater{}, p=\textless{}number\textgreater{}'.
\end{promptbox}
The coin template asks instead for the probability of each outcome. An answer is scored correct when the family keyword appears (for every family except the coin) and every parameter matches: probabilities within $0.0015$, integers exactly, and the Poisson rate within $0.5\%$ or $0.01$, whichever is larger.

\subsection{Parser and Invalid Outputs}

The parser is strict. It strips surrounding whitespace, one layer of matching quotes or backticks, and one trailing period, and the rest of the response must be exactly one outcome. For the biased coin, the outcome must be Heads or Tails, in any capitalization. For the integer families, it must be an integer inside the support of the target. Everything else maps to $\bot$. The common cases are extra words (``The outcome is 3''), more than one outcome, a non-integer, and an integer outside the support, such as $0$ for a geometric target or $9$ for a binomial target with $8$ trials. Training and evaluation use the same parser. In training, $\bot$ is an outcome with target probability $0$, so an invalid rollout receives the witness advantage $-2\hat p_{-i}(\bot)$ before centering, which is negative whenever another rollout of the group is also invalid. In evaluation, invalid responses are left out of the empirical distribution and their rate is reported separately.

\subsection{Targets}

For each training family, we take the candidate parameter settings that the Spectrum Suite code generates \citep{sorensen2026spectrum}. We keep a candidate if a perfect sampler's expected TV at $5{,}000$ draws is at most $0.10$, estimated from $1{,}000$ Monte Carlo replicates. This screen removes targets with more outcomes than a few thousand draws can resolve. From the kept candidates, we select $20$ settings spaced by quantiles of the primary parameter, such as $p$ for the geometric family. We sort them by this parameter and number them from $0$. Ranks $2$, $7$, $12$, and $17$ form the unseen-parameter set, and the other $16$ are training targets. Each target is also assigned the smallest $n \in \{500, 2{,}000, 5{,}000\}$ at which the perfect sampler's expected TV is at most $0.10$. This $n$ is the number of evaluation draws for unseen-parameter targets (Appendix~\ref{app:stats}). The five unseen families are screened the same way, with $20$ targets each. Hypergeometric targets come from the Spectrum Suite code, and the other four families from our own samplers. In the rest of the appendix, we write occupancy for the number of empty boxes, triangular for the discrete triangular, and log-series for the logarithmic series.

\subsection{Training}
\label{app:training}

Table~\ref{tab:hparams} lists the settings shared by all GRPO runs. We follow \citet{liu2025understanding} in two choices. First, the loss normalizes by a constant length instead of each response's length. Second, the advantages are centered by the group mean but not divided by the group's standard deviation of rewards. The original GRPO \citep{shao2024deepseekmath} divides by this standard deviation. We do not, because the size of the witness advantage measures how far the model is from the target (Section~\ref{sec:witness}), and dividing would remove this information and change the expected update derived in Appendix~\ref{app:witness}.

\begin{table}[h]
\centering
\caption{Settings shared by all GRPO runs. Exceptions are listed in the text.}
\label{tab:hparams}
\small
\renewcommand{\arraystretch}{1.08}
\begin{tabular}{@{}lp{0.6\linewidth}@{}}
\toprule
setting & value \\
\hmidrule
model & Qwen2.5-1.5B-Instruct \citep{yang2024qwen25}, full fine-tuning, bfloat16 mixed precision \\
implementation & GRPO in TRL \citep{vonwerra2020trl}, generation with vLLM \citep{kwon2023vllm} on the same GPU \\
group size $G$ & $64$ \\
prompts per step & $4$ ($256$ rollouts) \\
steps & $600$, one gradient step per generation batch (on-policy) \\
optimizer & AdamW, constant learning rate $2\times10^{-6}$, $\beta_1=0.9$, $\beta_2=0.99$, no weight decay \\
gradient clipping & $1.0$ \\
PPO clip range $\epsilon$ & $0.2$ \\
KL weight toward the initial model & $0.02$ \\
loss normalization & constant length \citep{liu2025understanding} \\
advantage & reward minus group mean, no division by the standard deviation \\
decoding in training & temperature $1$, at most $24$ response tokens \\
hardware and time & one 48\,GB GPU; about one GPU-hour per main run on the synthetic distributions, and $1.5$ to $1.7$ GPU-hours per run of the group-size sweep \\
\bottomrule
\end{tabular}
\end{table}

\paragraph{Exceptions.}
The group-scalar reward uses one prompt of $256$ rollouts per step, split into four subgroups of $64$, and is otherwise identical. The group-size sweep keeps $256$ rollouts per step. The witness runs use $256/G$ prompts per step, from $32$ at $G = 8$ to $1$ at $G = 256$, and the group-scalar runs use one prompt split into $256/G$ subgroups. The entropy-bonus baseline adds TRL's entropy bonus with coefficient $0.01$ to a reward for valid output. Its mean valid rate during training was $0.95$. The opinion-distribution runs use $1{,}600$ training prompts ($800$ urn and $800$ GlobalOpinionQA) and take about $2.5$ GPU-hours. The sibling-discovery runs use $400$ steps and take $1.3$ to $2.4$ GPU-hours, and their evaluation allows up to $32$ response tokens.

\paragraph{Cross-entropy baseline.}
The supervised baseline minimizes, for each training prompt, $-\sum_x q(x)\log P_\theta(y_x)$, where $y_x$ is the exact response string for outcome $x$ followed by the end-of-turn token. This loss replaces the outcome probability $\pi_\theta(x)$ in the loss of Section~\ref{sec:impl} by the probability of one canonical response, which is a lower bound on $\pi_\theta(x)$. The sum runs over a finite set of outcomes of each training target, on which $q$ is renormalized. For the coin it has both outcomes. For the other families except Zipf it has the outcomes between the $10^{-4}$ and $1-10^{-4}$ quantiles of the target, and for Zipf targets the most likely outcomes that cover $99\%$ of the probability, at most $2{,}000$. We fully fine-tune the same model with AdamW (constant learning rate $10^{-5}$, $\beta_1 = 0.9$, $\beta_2 = 0.999$, no weight decay, gradient clipping $1.0$) on minibatches of $8$ prompts. A time cap of $36$ minutes stopped training after $11$ epochs over the $80$ training targets ($110$ steps, about $0.6$ GPU-hours), and we evaluate the final checkpoint. We did not tune this recipe to preserve capability. Unlike our GRPO runs, it has no KL term toward the initial model, and its learning rate is five times the GRPO learning rate. \citet{zhang2024forcing} instead tune low-rank adapters for at most $50$ steps, stopping early on tasks whose target cannot be enumerated, and report little change on MT-Bench. Our full fine-tune costs $10$ MMLU points and $23$ IFEval points (Section~\ref{sec:synthetic}). This cost therefore reflects our recipe as well as the objective.

\subsection{Evaluation and Statistics}
\label{app:stats}

\paragraph{Sampling error.}
Training targets and unseen families are evaluated with $n = 500$ draws per target. Unseen-parameter targets and the targets of Figure~\ref{fig:phenomenon}b use the $n$ assigned above. Excess TV subtracts the expected TV of a perfect sampler at the same $n$. We estimate this quantity once for each target from $2{,}000$ Monte Carlo replicates. A reduction in excess TV compares medians over the targets of a set, $1 - \mathrm{median}(\text{excess TV}_{\text{trained}})/\mathrm{median}(\text{excess TV}_{\text{untrained}})$.

\paragraph{Capability.}
MMLU uses the full test set ($14{,}042$ questions) with greedy decoding and counts an answer as correct when the first letter from A to D in the response is the right one. IFEval uses its $541$ prompts and reports instruction-level strict accuracy. The noise band of each benchmark is the $95\%$ bootstrap interval of the untrained model's score, from $2{,}000$ replicates. Its half-width ($0.008$ for MMLU and $0.033$ for IFEval) was fixed before training.

\paragraph{Intervals.}
Unless stated otherwise, an interval is a $95\%$ paired bootstrap interval over the targets of a set, from $2{,}000$ replicates. Rows that pool seeds resample targets within each seed. Capability intervals resample benchmark items, clustered by prompt for IFEval. On GlobalOpinionQA, the intervals of the reduction resample question stems.

\section{Additional Results on Synthetic Distributions}
\label{app:more}

This appendix adds detail to Section~\ref{sec:synthetic}. We give per-family results, the full comparison of the three rewards, the training-free baselines, a split of the remaining error, a test of whether the model relies on the listed outcomes, and a study of how transfer depends on the variety of training targets.

\subsection{Results per Family}

Table~\ref{tab:synthfull} reports all three synthetic evaluation sets for the untrained model, supervised cross-entropy, and the witness advantage. Table~\ref{tab:families} gives the median excess TV per family, for the three rewards on the training families and for the two witness checkpoints on the unseen families. The witness advantage lowers excess TV on every training family, and its largest remaining errors are on Poisson and binomial targets (Appendix~\ref{app:decomposition}). On the unseen families, the checkpoint trained on the original prompts improves four families and leaves occupancy about the same ($0.615$ to $0.623$). The retrain on prompts in the evaluation format improves all five. Supervised cross-entropy, trained on the original prompts, removes $31\%$ of the excess TV on the unseen families, against $36\%$ for the witness advantage trained on the same prompts.

\begin{table}[h]
\centering
\caption{Sampling error on the three synthetic evaluation sets and general capabilities. Supervised cross-entropy fits the training targets and unseen parameters more closely than the witness advantage but loses $10$ MMLU and $23$ IFEval points. Sampling error is the median excess TV on the 64 training and 16 unseen-parameter targets outside the Zipf family and on the 100 unseen-family targets. Capability is accuracy, with 95\% bootstrap half-widths of $0.008$ (MMLU) and $0.033$ (IFEval).}
\label{tab:synthfull}
\small
\setlength{\tabcolsep}{7pt}
\setlength{\aboverulesep}{0pt}
\setlength{\belowrulesep}{0pt}
\renewcommand{\arraystretch}{1.2}
\begin{tabular}{@{}p{2.25cm}lcc>{\columncolor{accent!7}[\tabcolsep][0pt]}c@{}}
\toprule
 & & untrained & cross-entropy & witness (ours) \\
\hmidrule
\multirow{3}{=}{sampling error} & training targets & 0.477 & 0.053 & 0.101 \\
 & unseen parameters & 0.483 & 0.094 & 0.161 \\
 & unseen families & 0.589 & 0.406 & 0.376 \\
\hmidrule
\multirow{2}{=}{capability} & MMLU & 0.584 & 0.482 & 0.583 \\
 & IFEval & 0.512 & 0.281 & 0.528 \\
\bottomrule
\end{tabular}
\end{table}

\begin{table}[h]
\centering
\caption{Median excess TV per family. Top: training families after 600 steps, with the lowest value per family in bold. Bottom: unseen families and all 100 unseen-family targets pooled, for the untrained model and for the witness advantage trained on the original prompts or on prompts in the evaluation format. All values use $n = 500$ draws per target. Figure~\ref{fig:phenomenon}b uses the $n$ assigned to each target, so its untrained values differ from those in this table.}
\label{tab:families}
\small
\renewcommand{\arraystretch}{1.08}
\begin{tabular*}{0.86\linewidth}{@{\extracolsep{\fill}}lcccc@{}}
\toprule
training family & untrained & witness (ours) & group-scalar & sign witness \\
\hmidrule
biased coin & 0.361 & 0.034 & 0.440 & \textbf{0.021} \\
geometric & 0.372 & 0.050 & 0.309 & \textbf{0.028} \\
binomial & 0.516 & \textbf{0.176} & 0.511 & 0.255 \\
Poisson & 0.691 & 0.421 & 0.696 & \textbf{0.263} \\
Zipf & 0.563 & \textbf{0.024} & 0.447 & 0.523 \\
\bottomrule
\end{tabular*}

\vspace{9pt}
\begin{tabular*}{0.86\linewidth}{@{\extracolsep{\fill}}lccc@{}}
\toprule
 & & \multicolumn{2}{c}{witness advantage (ours)} \\
\hcmidrule{\cmidrule(lr){3-4}}
unseen family & untrained & original prompts & evaluation-format prompts \\
\hmidrule
log-series & 0.606 & 0.139 & 0.197 \\
triangular & 0.299 & 0.221 & 0.184 \\
maximum of dice & 0.662 & 0.436 & 0.522 \\
hypergeometric & 0.654 & 0.564 & 0.545 \\
occupancy & 0.615 & 0.623 & 0.547 \\
\hmidrule
pooled (100 targets) & 0.589 & 0.376 & 0.447 \\
\bottomrule
\end{tabular*}
\end{table}

\subsection{Comparing the Three Rewards}

Table~\ref{tab:paired} gives the paired comparisons on unseen targets behind Section~\ref{sec:synthetic}. The witness advantage beats the group-scalar reward on unseen parameters in each of three seeds. The sign witness is indistinguishable from the witness advantage on $36$ unseen targets outside the Zipf family. These are the $16$ unseen-parameter targets of the other four training families and the $20$ hypergeometric targets, evaluated with prompts in the original format. On the Zipf training targets the two differ (Figure~\ref{fig:training}b).

\begin{table}[h]
\centering
\caption{Paired per-target differences in TV on unseen targets outside the Zipf family. The top block uses the $16$ unseen-parameter targets, and the bottom block adds $20$ hypergeometric targets. Each difference is the TV of the second method minus the TV of the first, so a positive value favors the first. The superscript gives the larger side of the 95\% bootstrap interval.}
\label{tab:paired}
\small
\setlength{\tabcolsep}{8pt}
\renewcommand{\arraystretch}{1.1}
\begin{tabular}{@{}p{3.9cm}lc@{}}
\toprule
 & & median difference \\
\hmidrule
\multirow{4}{=}{witness advantage vs.\ group-scalar reward\newline\tsub{\footnotesize 16 unseen-parameter targets, no Zipf}} & seed 0 & \ci{0.248}{0.131} \\
 & seed 1 & \ci{0.153}{0.137} \\
 & seed 2 & \ci{0.223}{0.129} \\
 & pooled & \ci{0.211}{0.073} \\
\hmidrule
\multirow{2}{=}{sign witness vs.\newline\tsub{\footnotesize 36 unseen targets}} & group-scalar reward & \ci{0.075}{0.123} \\
 & witness advantage & \ci{0.014}{0.023} \\
\bottomrule
\end{tabular}
\end{table}

On the unseen families, the witness advantage and the group-scalar reward are closer. Table~\ref{tab:groupsize} varies the group size at a fixed $256$ rollouts per step, and the intervals of the two rewards overlap at every group size. Two further sets of witness runs have no group-scalar counterpart. At $G = 256$, the witness advantage removes $31.1\%$ of the excess TV (interval $[22.1, 40.6]$). A second seed at $G = 8$, $16$, $32$, and $128$ removes $38.2\%$, $40.3\%$, $36.9\%$, and $32.0\%$.

\begin{table}[h]
\centering
\caption{Group-size sweep with 256 rollouts per step, seed 0. Each entry is the reduction in excess TV on the 100 unseen-family targets, in percent, and the superscript gives the larger side of the 95\% paired bootstrap interval. All runs were trained on the original prompts and evaluated on the unseen-family prompts, so they compare with the $36\%$ of the main run, not with the $24\%$ of the retrain on the evaluation format. The $G=64$ witness run is separate from the main run.}
\label{tab:groupsize}
\small
\setlength{\tabcolsep}{12pt}
\renewcommand{\arraystretch}{1.1}
\begin{tabular}{@{}ccc@{}}
\toprule
group size $G$ & witness advantage (ours) & group-scalar reward \\
\hmidrule
8 & \ci{39.4}{12.9} & \ci{30.8}{10.4} \\
16 & \ci{36.2}{13.8} & \ci{30.3}{10.7} \\
32 & \ci{35.8}{10.8} & \ci{28.6}{8.7} \\
64 & \ci{34.4}{11.9} & \ci{29.8}{12.9} \\
128 & \ci{35.8}{9.5} & \ci{24.6}{12.7} \\
\bottomrule
\end{tabular}
\end{table}

\paragraph{A reward for diversity alone.}
Rewarding diversity without reference to the target makes sampling worse than no training. Outside the Zipf family, the entropy-bonus run (Appendix~\ref{app:training}) reaches a median excess TV of $0.663$ on the training targets, against $0.477$ for the untrained model, and $0.730$ on the unseen parameters, against $0.483$.

\subsection{Training-Free Baselines}
\label{app:alternatives}

Table~\ref{tab:trainingfree} compares the training-free baselines of Section~\ref{sec:synthetic} on the unseen families. None of them removes more than $19\%$ of the excess TV, against $36\%$ for training with the witness advantage. All settings use the untrained Qwen2.5-1.5B-Instruct and $500$ draws per target unless stated otherwise. Settings that apply a method on top of the witness-trained model use a reproduction of the main run, trained with the same recipe. It reaches a pooled excess TV of $0.380$, against $0.376$ for the main run, and their paired per-target difference is indistinguishable from zero (interval $[-0.002, 0.002]$).

\begin{table}[h]
\centering
\caption{Training-free methods on the 100 unseen-family targets, applied to the untrained Qwen2.5-1.5B-Instruct, whose median excess TV is $0.589$. Reduction is relative to the untrained model. Both temperatures are chosen with knowledge of the targets.}
\label{tab:trainingfree}
\small
\setlength{\tabcolsep}{12pt}
\renewcommand{\arraystretch}{1.1}
\begin{tabular}{@{}lcc@{}}
\toprule
method & excess TV & reduction \\
\hmidrule
renormalization & 0.600 & $-2\%$ \\
seed conditioning & 0.546 & $7\%$ \\
verbalized sampling (best case) & 0.518 & $12\%$ \\
global temperature ($2.76$) & 0.514 & $13\%$ \\
per-target oracle temperature & 0.478 & $19\%$ \\
\hmidrule
witness advantage (ours) & \textbf{0.376} & $\mathbf{36\%}$ \\
\bottomrule
\end{tabular}
\end{table}

\paragraph{Verbalized sampling.}
Verbalized sampling \citep{zhang2025verbalized} asks the model to list outcomes with probabilities and samples from the parsed list. Our prompt asks for five outcomes with probabilities. We parse the list with the same parser, renormalize it, and draw $500$ outcomes from it. With one list per target, most lists do not parse. The published prompt, a variant that also names the support, and the published XML form give pooled excess TV between $0.945$ and $0.959$, with a parse rate of $0.05$ to $0.45$. With the published prompt, $46$ of $100$ targets yield lists whose probabilities are all zero, and $14$ more list only outcomes outside the support.

Table~\ref{tab:trainingfree} reports the best case, which pools ten verbalizations at temperature $1$ with the support-naming prompt ($0.518$). At least one of the ten lists parses on every target. Even the lists that parse are wrong. Their median TV to the target is $0.71$, and $0.78$ for greedy lists. Applied on top of the witness-trained model, verbalized sampling is equally poor ($0.957$ with the support-naming prompt). \citet{zhang2025verbalized} evaluate verbalized sampling on frontier API models and on open models with 70B parameters or more, and report that its gains grow with model scale, so our results describe it only at 1.5B.

\paragraph{Seed conditioning.}
Seed conditioning \citep{nagarajan2025dice} trains and tests with a random string at the start of the prompt. We add such a string at inference only, here with $16$ characters. With greedy decoding, three seed formats, one of them with $64$ characters, give an excess TV of $0.834$ to $0.847$, against $0.904$ for greedy decoding without a seed, and a median of two to four distinct outcomes per target. At temperature $1$, the seeded prompt reaches $0.546$ and the same prompt with a constant seed reaches $0.542$ (paired difference $0.0001$, interval $[-0.003, 0.005]$), so the randomness of the seed contributes nothing. \citet{nagarajan2025dice} add seeds during both training and sampling, and we test only the sampling part.

\paragraph{Renormalization.}
Renormalization sets the probability of every invalid outcome to zero and rescales the rest to sum to one. Applied to the model's sequence-level probabilities over the valid outcomes, it gives $0.600$ (paired difference from plain sampling $+0.007$, interval $[0.002, 0.013]$). Taking the most likely valid outcome gives $0.903$. Editing the logits of a single token is possible on only $29$ of the $100$ targets, where it reaches $0.725$. On top of the witness-trained model, renormalization changes little ($0.373$ against $0.380$).

\paragraph{Temperature.}
A temperature chosen for each target to minimize the exact TV to $q$ gives $0.478$. The single best temperature for all targets, $2.76$, gives $0.514$. Both are tuned with knowledge of the targets, so they bound what temperature scaling can do. Replacing the model's distribution by $q$ itself gives $0.001$, the excess TV of a perfect sampler, which is zero up to Monte Carlo error. On the $64$ training targets outside the Zipf family and the $36$ unseen targets of Table~\ref{tab:paired}, temperatures $1.3$ and $1.5$ reduce the excess TV by at most $9.5\%$ on the training targets and $6.5\%$ on the $36$ unseen targets, and min-$p$ sampling and an explicit instruction to sample at random increase it.

\paragraph{Larger models.}
Scaling the untrained model does not help. On the unseen families, Qwen2.5-3B-Instruct, Qwen2.5-7B-Instruct \citep{yang2024qwen25}, Llama-3.2-3B-Instruct, and Llama-3.1-8B-Instruct \citep{grattafiori2024llama3} all have a higher median excess TV than the 1.5B model, $0.63$ to $0.72$ against $0.59$ (Figure~\ref{fig:phenomenon}c).

\subsection{Where the Remaining Error Is}
\label{app:decomposition}

Figure~\ref{fig:mechanism} reports two diagnostics. Panel (a) checks part (i) of Proposition~\ref{prop:scalar} in training. Panel (b) splits the remaining error of the witness advantage on its two hardest families.

\begin{figure}[h]
\centering
\includegraphics[width=0.85\linewidth]{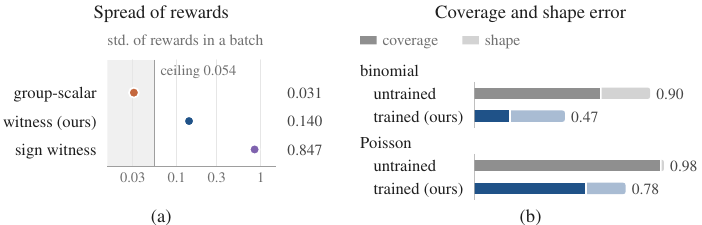}
\caption{(a)~Standard deviation of the rewards in a batch, averaged over training, on a log scale. For the group-scalar reward, it equals the standard deviation of the advantages, and it lies below the ceiling of Proposition~\ref{prop:scalar}(i) (vertical line). (b)~TV on binomial and Poisson training targets, split into a coverage term (dark) and a shape term (light), for the untrained model and after training with the witness advantage. Each bar stacks the medians of the two terms over four targets, so its total can differ from the median TV. Training mainly reduces the coverage term.}
\label{fig:mechanism}
\end{figure}

\paragraph{Spread of the rewards.}
For each run we log the standard deviation of the $256$ rewards in a batch and average it over the $600$ steps. It is $0.031$ for the group-scalar reward, $0.140$ for the witness advantage, and $0.847$ for the sign witness. For the group-scalar reward, all rewards in a batch come from one prompt and differ from the advantages by a constant, so $0.031$ is also the spread of its advantages. It lies below the ceiling $\sqrt{3/4}/(2\sqrt{64}) = 0.054$ of Proposition~\ref{prop:scalar}(i). For the other two rewards, a batch contains four prompts and GRPO subtracts a separate mean for each, so their logged spread also includes differences between prompts. The group-scalar reward gives at most four distinct advantages per batch, whereas the witness advantage can give $256$.

\paragraph{Coverage and shape.}
On binomial and Poisson targets, the untrained model puts much of its probability on outcomes that the target almost never produces. To separate this error from errors in the relative probabilities, we split the outcomes of each target into its bulk, the interval $[F^{-1}(10^{-4}), F^{-1}(1-10^{-4})]$, where $F$ is the target's cumulative distribution function, and the rest. The TV then splits exactly into two terms. The coverage term is half the absolute error on produced outcomes outside the bulk plus half the target probability of the outcomes the model never produced. The shape term is half the absolute error on produced outcomes inside the bulk. We compute both from $500$ draws on the four binomial and four Poisson training targets with the highest excess TV after training.

Before training, most of the error is coverage. On binomial targets the untrained model's median TV is $0.87$, and the median share of coverage is $71\%$. On Poisson targets the median TV is $0.98$, and the median share of coverage is $98\%$, with a sample mean a tenth of the target mean. Training with the witness advantage lowers the coverage term from $0.65$ to $0.18$ on binomial targets and from $0.95$ to $0.57$ on Poisson targets, and raises the ratio of the Poisson sample mean to the target mean from $0.10$ to $0.51$. After training, the shape terms are $0.29$ and $0.21$. The remaining binomial error is therefore mostly shape, while most of the remaining Poisson error is still coverage. On both families the draws are much more spread out than the target. The median ratio of the standard deviation of the draws to that of the target is between $7$ and $15$, before and after training.

\subsection{Copying the Listed Outcomes}
\label{app:attribution}

Every unseen-family prompt names the valid outcomes. A model could lower its excess TV by copying this list without reading the rest of the prompt. We first explain why the prompts list the outcomes. We then remove the list at evaluation and find that most of the improvement remains.

\paragraph{Why the prompts list the outcomes.}
Before we fixed the unseen-family prompts, we evaluated the witness checkpoint on the $20$ hypergeometric targets in the original format, whose prompts do not name the valid outcomes. Training moved excess TV only from $0.625$ to $0.614$, and the fraction of valid outputs fell from $0.64$ to $0.40$. When few items are drawn, the support is tiny. When one item is drawn, it is $\{0,1\}$, and the model emits counts far above it. A model cannot know the support of a family it has never seen, so the unseen-family prompts name it.

\paragraph{Removing the list.}
We evaluate the untrained and the trained model with and without the sentence that names the valid outcomes, on all $100$ unseen-family targets. The trained model is the reproduction of Appendix~\ref{app:alternatives}. Training improves excess TV by a median of $0.176$ per target (interval $[0.124, 0.227]$) with the sentence and $0.135$ ($[0.120, 0.173]$) without it, so $77\%$ of the median improvement remains when the list is removed. The median per-target difference between the two improvements is $0.057$ ($[0.010, 0.103]$), so the list contributes, but most of the improvement does not depend on it.

On log-series, a large improvement remains without the sentence, $0.226$ ($[0.204, 0.247]$). On occupancy, the improvement is larger without the sentence (difference $-0.054$, interval excluding zero). Removing the sentence also changes the prompt format, so this test does not separate the content of the list from the change of format.

\subsection{Variety of the Training Targets}
\label{app:variety}

Transfer to unseen families may depend on how varied the training targets are. To test this at matched compute ($600$ steps, same group size and decoding), we compare the five-family retrain with the witness advantage trained on three other training sets, all in the evaluation format. One has a single family (the $16$ geometric targets), one a single target (a geometric target with $p = 0.551$), and one fifteen families ($240$ targets). Table~\ref{tab:variety} gives the results.

\begin{table}[h]
\centering
\caption{Median excess TV on the 100 unseen-family targets after training on sets of different variety, all with prompts in the evaluation format and the same compute. Reduction is relative to the untrained model ($0.589$).}
\label{tab:variety}
\small
\setlength{\tabcolsep}{12pt}
\renewcommand{\arraystretch}{1.1}
\begin{tabular}{@{}lcc@{}}
\toprule
training set & excess TV & reduction \\
\hmidrule
one geometric target & 0.397 & $33\%$ \\
one family (16 geometric targets) & 0.462 & $22\%$ \\
five families (80 targets) & 0.447 & $24\%$ \\
fifteen families (240 targets) & 0.394 & $33\%$ \\
\bottomrule
\end{tabular}
\end{table}

Widening from five to fifteen families lowers excess TV on the typical target (median paired difference $-0.038$, interval $[-0.045, -0.025]$).

The single-target run reaches about the same pooled value as the fifteen-family run, $0.397$ against $0.394$, and a second seed of the single-target run reaches $0.383$. Its gains are narrow, however. Most come from log-series, the one unseen family that shares the geometric family's support and decreasing shape (excess TV from $0.606$ to $0.119$), and from occupancy ($0.615$ to $0.456$). On hypergeometric and maximum-of-dice targets it ends worse than the untrained model ($0.654$ to $0.693$ and $0.662$ to $0.701$), while the five-family run improves every unseen family. The single-target model also does not read the stated parameter. On unseen geometric parameters, the single-family model has excess TV $0.02$ to $0.11$ for $p$ from $0.155$ to $0.914$, while the single-target model is accurate only near its training value and reaches $0.44$ at $p = 0.155$.

Because compute is matched, fewer training targets also mean more visits to each prompt ($2{,}400$ group visits for the single target against $10$ per target for fifteen families), so variety and repetition change together in this comparison.

\section{Opinion Distributions and Structured Outputs}
\label{app:real}

This appendix gives the details of Section~\ref{sec:real}. Table~\ref{tab:opinions} lists the results on both tasks.

\begin{table}[h]
\centering
\caption{Stated opinion distributions and urn draws (top) and structured outputs (bottom), in median excess TV. Trained values are means over seeds, and reductions give the range over seeds: three for the opinion tasks, two for uniform sibling targets, and one for non-uniform sibling targets. The structured-output targets are evaluated on 60 test graphs.}
\label{tab:opinions}
\small
\renewcommand{\arraystretch}{1.1}
\begin{tabular*}{\linewidth}{@{\extracolsep{\fill}}lcccc@{}}
\toprule
opinion evaluation set & Qwen2.5-1.5B & Llama-3.1-8B & trained (ours) & reduction \\
\hmidrule
urn draws & 0.33 & 0.23 & 0.04 & 80 to 91\% \\
GlobalOpinionQA & 0.26 & 0.10 & 0.03 & 86 to 92\% \\
NYTimes task & 0.48 & 0.32 & 0.15 & 61 to 72\% \\
implicit variant & 0.31 & 0.21 & 0.11 & 56 to 69\% \\
\bottomrule
\end{tabular*}

\vspace{9pt}
\begin{tabular*}{\linewidth}{@{\extracolsep{\fill}}lcccc@{}}
\toprule
structured-output target & Qwen2.5-1.5B & best decoding & trained (ours) & reduction \\
\hmidrule
sibling pairs, uniform & 0.568 & 0.540 & 0.358 & 35 to 39\% \\
sibling pairs, non-uniform & 0.533 & 0.458 & 0.280 & 47\% \\
\bottomrule
\end{tabular*}
\end{table}

\paragraph{Opinion distributions.}
The opinion prompts state the answer options, name them as the valid outcomes, and list the target probability of each option to three decimals. The implicit variant removes the probability line. Each opinion prompt is evaluated with $n = 250$ draws instead of $500$. The training set has $1{,}600$ prompts, $800$ from urn draws and $800$ from GlobalOpinionQA, and the test set has $200$ prompts from each task. The urn prompts describe $1{,}000$ urns from the Spectrum Suite, which we split at random into $800$ training and $200$ test urns. The split does not remove duplicates, and $4$ test urns have the same description as a training urn. The GlobalOpinionQA prompts come from $75$ question stems. The test prompts use $18$ of them, and the training prompts use the other $57$, so no test question appears in training.

\paragraph{Comparisons on opinion tasks.}
On the test prompts of both tasks and on the NYTimes task, the trained model has lower excess TV than the best prompting method. On the test prompts, it also has lower excess TV than the best temperature. Across seeds, its paired margin in excess TV is $0.14$ to $0.32$ over the best prompting method and $0.19$ to $0.23$ over the best temperature, and every interval excludes zero. A group-scalar run on the opinion tasks stopped at step $77$ of $600$, when the mean rate of valid output over the previous $20$ steps fell below $0.5$.

\paragraph{Structured outputs.}
\label{app:structured}
We adapt the sibling-discovery task of \citet{nagarajan2025dice}. In their version, the model learns the graph during training, the prompt does not show it, and the model generates two siblings together with their parent. Our prompt instead names a parent and its children and asks for one pair of siblings, so the valid outputs are all pairs of children. We state two kinds of targets over these pairs. A uniform target asks for a pair drawn uniformly at random and does not list the pairs, so the model must build them from the list of children. A non-uniform target lists each pair with its probability. The uniform target tests whether the model finds the valid outputs, and the non-uniform target tests whether it also follows the stated probabilities over them. For uniform targets, we train on $800$ graphs with $8$ to $20$ children and evaluate on $60$ test graphs with $8$ children ($28$ pairs). For non-uniform targets, we train on $200$ graphs with $6$ children and evaluate on $60$ test graphs of the same size.

On uniform targets, training removes $35\%$ and $39\%$ of the excess TV in two seeds, compared with $5\%$ for the best decoding baseline and $8\%$ for the group-scalar reward, which reaches $0.523$. Training also raises the median fraction of valid pairs that the model produces at least once in $500$ draws from $0.71$ to $0.93$ (first seed), so the trained model finds more of the valid outputs. On non-uniform targets, training removes $47\%$ of the excess TV, compared with $14\%$ for the best decoding baseline.